%% file: example_paper.tex
\documentclass{article}

\usepackage{microtype}
\usepackage{graphicx}
\usepackage{booktabs} 

\usepackage{hyperref}

\usepackage[preprint]{icml2025}

\usepackage[most]{tcolorbox}

\usepackage{amsmath}
\usepackage{amssymb}
\usepackage{mathtools}
\usepackage{amsthm}

\usepackage{epigraph}
\usepackage[capitalize,noabbrev]{cleveref}

\usepackage{multirow}
\usepackage{booktabs}
\usepackage{bm}
\usepackage{xcolor}
\usepackage{caption}
\usepackage[table]{xcolor}
\usepackage{subcaption}
\usepackage{algorithm}
\usepackage{algorithmic}

\usepackage{siunitx}

\newcommand{\improve}[1]{\textcolor{green!60!black}{$\blacktriangle$#1}}
\newcommand{\stddec}[1]{\textcolor{green!60!black}{\scriptsize $\bm{\Delta\sigma=}\bm{#1}$}}

\newcommand{\kbmlift}{\mathcal{F}_{\text{KBM}}}
\newcommand{\ccpplift}{\mathcal{F}_{\text{CCPP}}}
\newcommand{\mlplift}{\mathcal{F}_{\text{MLP}}}

\usepackage{pgfplots}
\pgfplotsset{compat=1.18}
\usepackage{tikz}
\newlength{\adtframesize}
\newcommand{\adtframe}[1]{%
  \begin{minipage}[b]{\adtframesize}
    \centering
    \includegraphics[width=\adtframesize,height=\adtframesize,keepaspectratio]{#1}
  \end{minipage}%
}

\newcommand{\hypbox}[2]{%
    \begin{tcolorbox}[colback=white!98!black,colframe=white!30!black,boxsep=1.1pt,top=6.75pt]%
    \vspace{1.75pt}%
    \textbf{#1}\\[-0.575em]
    \noindent\makebox[\textwidth]{\rule{\textwidth}{0.4pt}}
    \\[0.25em]
    #2
    \end{tcolorbox}
}

\usepackage{pifont}

\theoremstyle{plain}
\newtheorem{theorem}{Theorem}[section]
\newtheorem{proposition}[theorem]{Proposition}

\theoremstyle{definition}

\theoremstyle{remark}

\usepackage[textsize=tiny]{todonotes}

\icmltitlerunning{Addressing the Waypoint--Action Gap in End-to-End Autonomous Driving via Vehicle Motion Models}

\begin{document}

\twocolumn[
\icmltitle{Addressing the Waypoint--Action Gap in End-to-End \\ Autonomous Driving via Vehicle Motion Models}



\icmlsetsymbol{equal}{*}

\begin{icmlauthorlist}
\icmlauthor{Jorge Daniel Rodr\'{i}guez-Vidal}{cvc}
\icmlauthor{Gabriel Villalonga}{cvc,uab}
\icmlauthor{Diego Porres}{cvc}
\icmlauthor{Antonio M. L\'{o}pez Pe\~{n}a}{cvc,uab}
\end{icmlauthorlist}

\icmlaffiliation{cvc}{Computer Vision Center (CVC), Barcelona, Spain}
\icmlaffiliation{uab}{Dpt. Ci\`{e}ncies de la Computaci\'{o}, Universitat Aut\'{o}noma de Barcelona (UAB), Spain}

\icmlcorrespondingauthor{Jorge Daniel Rodr\'{i}guez-Vidal}{jdrodriguez@cvc.uab.cat}

\icmlkeywords{Machine Learning, ICML}

\vskip 0.3in
]



\printAffiliationsAndNotice{\icmlEqualContribution} 

\input{sec/0_abstract}
\input{sec/1_intro}

\input{sec/2_related_work}
\input{sec/3a_new_method}
\input{sec/4_experiments}
\input{sec/5_conclusions}
\bibliography{main}
\bibliographystyle{icml2025}
\input{sec/6_appendix}

\end{document}

%% file: sec/0_abstract.tex
\begin{abstract}
End-to-End Autonomous Driving (E2E-AD) systems are typically grouped by the nature of their outputs: (i) waypoint-based models that predict a future trajectory, and (ii) action-based models that directly output throttle, steer and brake. Most recent benchmark protocols and training pipelines are waypoint-based, which makes action-based policies harder to train and compare, slowing their progress. To bridge this waypoint–action gap, we propose a novel, differentiable vehicle-model framework that rolls out predicted action sequences to their corresponding ego-frame waypoint trajectories while supervising in waypoint space. Our approach enables action-based architectures to be trained and evaluated, for the first time, within waypoint-based benchmarks without modifying the underlying evaluation protocol. We extensively evaluate our framework across multiple challenging benchmarks and observe consistent improvements over the baselines. In particular, on NAVSIM \emph{navhard} our approach achieves state-of-the-art performance. Our code will be made publicly available upon acceptance.
\end{abstract}


%% file: sec/1_intro.tex
\begin{figure}[t]
    \centering
    \includegraphics[width=\columnwidth]{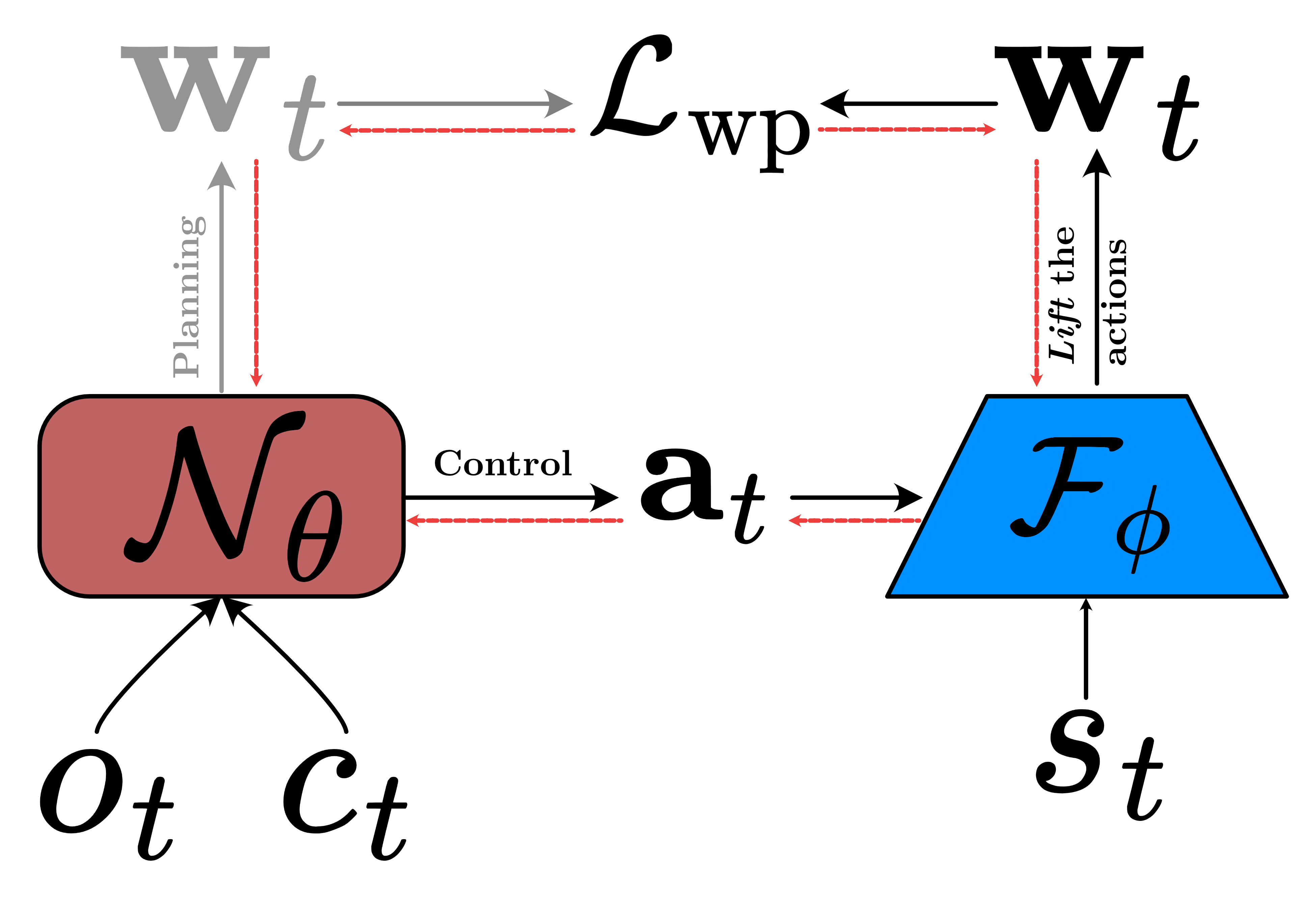}
    \caption{
\textbf{Bridging the \emph{Waypoints--Actions} gap with our differentiable framework.}
Black arrows denote the forward pass and red arrows denote gradient flow.
Here, $\mathcal{N}_\theta$ is the policy network, which takes as input the current observations $o_{t}$ and high-level command $c_{t}$. Two paradigms exist for the outputs of the policy: \emph{planning-based}, which outputs the future waypoints $\mathbf{w}_{t}$ the ego vehicle will follow, and \emph{control-based}, which outputs low-level controls $\mathbf{a}_{t}$. $\mathcal{F}_\phi$, the lifting operator, will take these actions along the state $s_{t}$ of the vehicle, and lift them to a set of waypoints. This allows us to use the gradients from a waypoint or planning-based loss on an action-based policy.
}

    \label{fig:short}
\end{figure}

\section{Introduction}
\label{sec:intro}

\epigraph{Give me the positions and velocities of all the particles in the universe, and I will predict the future.}{\textit{Marquis Pierre Simon de Laplace}} 

End-to-end autonomous driving (E2E-AD) has emerged as a promising paradigm that directly maps raw sensor observations to driving commands. Within this scope, two paradigms are commonly used: \emph{action-based} methods that predict low-level controls such as steering and longitudinal acceleration, and \emph{waypoint-based} methods that predict a future trajectory. In this work, we focus on the action-based paradigm, as it alleviates the need for high-precision GPS and supports deployment in rural areas where connectivity may be slow or unreliable.

Historically, action-based and waypoint-based approaches evolved in parallel, but in recent years the E2E-AD research community has increasingly focused on waypoint-based models, creating a \emph{Waypoint--Action gap}. This imbalance is further amplified by waypoint-only benchmarks ~\cite{Dauner24NAVSIM,Cao25PseudoSimulation,Runsheng25Waymo}, in which agents are required to output fixed-horizon waypoint trajectories, while direct control outputs are not supported. Consequently, they only accommodate waypoint-based architectures, while action-based policies cannot be evaluated without introducing an additional \emph{non-standard action interface}, e.g. by discretising the control space via a tokenised library ~\cite{Wu24SMART,Li24HydraMDP}. This motivates the need for a general mechanism that can adapt action-based policies to waypoint-based benchmarks.

In this work, we introduce an \emph{analytic bridge} between action and waypoint-based policy representations by explicitly \emph{lifting} low-level control outputs (i.e., throttle, steering, brake) into waypoint trajectories through a differentiable vehicle-model framework, enabling fair training, evaluation, and comparison under existing waypoint-based benchmarks without modifying their interfaces or protocols. Concretely, we instantiate this bridge with two models: the Kinematic Bicycle Model (KBM) ~\cite{Rajamani12VehicleControl,Kong15VehicleModels,Polack17KinematicBicycle} and a continuous-curvature kinematic model inspired by the Continuous Curvature Path Planner (CCPP)~\cite{Scheuer96ContinuousCurvature}. The KBM is a single-track vehicle model that captures the nonholonomic constraints of car-like robots~\cite{Dubins57,ReedsShepp90,LaValle06Planning} and underpins many path-tracking and planning algorithms used in practice~\cite{Coulter92PurePursuit,Thrun06Stanley,Snider09Steering}. The CCPP is a collision-free planner that generates smooth trajectories by composing clothoid arcs. Since clothoid curvature varies linearly with arc length, CCPP avoids the curvature discontinuities of straight--circular-arc paths that would otherwise force abrupt steering reorientation \cite{Scheuer96ContinuousCurvature}.

We evaluate our framework on four challenging benchmarks: (i) NAVSIM \texttt{navtest}~\cite{Dauner24NAVSIM}, (ii) NAVSIM \texttt{navhard}~\cite{Cao25PseudoSimulation}, (iii) Bench2Drive~\cite{Jia24Bench2Drive}, and (iv) a CARLA-based evaluation protocol~\cite{Dosovitskiy17,Porres24GuidingAttention}. Across NAVSIM \texttt{navhard}, NAVSIM \texttt{navtest}, Bench2Drive and CARLA, action-based policies equipped with our framework match or outperform waypoint-based baselines: our framework achieves vision-only state-of-the-art on \texttt{navhard}, come within $1.5\%$ of the best vision-only model on \texttt{navtest}, improve Bench2Drive baseline model DS by up to $61.1\%$, and obtain the strongest loss--outcome correlation under the CARLA-based evaluation protocol.

We summarize our main contributions as follows:

\begin{itemize}
    \item \textbf{Waypoint-based training objective.} We show that coupling action policies to our framework and supervising them with a waypoint-space loss leads to \emph{more stable evaluation}, \emph{higher closed-loop performance}, and a \emph{stronger correlation} with driving outcomes than standard action-space objectives.
    \item \textbf{Differentiable and deterministic framework.} To the best of our knowledge, it is the first time that a \emph{deterministic} and \emph{differentiable} vehicle-dynamics framework is introduced to lift low-level actions to waypoints.
    \item \textbf{Unified comparison across paradigms.} We instantiate our framework with two vehicle models: a Kinematic Bicycle Model (KBM) and a Continuous Curvature Path Planner (CCPP), which are able to lift low-level actions into waypoint trajectories, enabling \emph{action-based} and \emph{waypoint-based} policies to be trained and evaluated under the same waypoint protocol without modifying existing benchmarks.
\end{itemize}

%% file: sec/2_related_work.tex
\section{Related Work}
\label{sec:related_work}

\textbf{Origins of car-like kinematics.}
Foundational studies of curvature-bounded motion for car-like systems predate autonomous driving. Dubins characterised shortest forward-only paths under bounded curvature \yrcite{Dubins57}, and Reeds extended the result to cars that can drive in reverse \yrcite{ReedsShepp90}. These works established the nonholonomic nature of car-like motion and motivated simplified geometric models used to this day. Comprehensive treatments in robotics and vehicle dynamics texts later formalised the kinematics and constraints of car-like robots, including front-wheel steering and Ackermann geometry, paving a direct path to the single-track (``bicycle'') abstraction used for planning and control in practice~\cite{LaValle06Planning,Rajamani12VehicleControl}. More recently, Hanselmann \emph{et al.} \yrcite{Hanselmann22KING} used a kinematic bicycle model as a differentiable proxy for the non-differentiable CARLA simulator. While conceptually related, our work is different: we integrate differentiable vehicle models directly into the training pipeline to bridge action and waypoint-based representations for benchmarking, whereas prior work employs such proxies primarily for gradient-based scenario generation.

\noindent\textbf{Learning-based E2E driving: actions vs.\ waypoints.}
Early end-to-end (E2E) driving work demonstrated that a single network can map raw sensor inputs and a high-level navigation command directly to continuous low-level controls (throttle, steer, brake), as in \emph{Conditional Imitation Learning} (CIL) and its CARLA behavior-cloning extension \emph{CILRS} \cite{Dosovitskiy17,Codevilla18CIL,Codevilla19BC}. More recent action-based baselines such as \emph{CIL++} strengthened this paradigm with higher-resolution multi-view perception and attention \cite{Xiao2023cilpp}. Complementarily, \emph{MILE} advanced action-based E2E driving by introducing \emph{model-based imitation learning}: it jointly learned a compact latent \emph{world model} and an ego policy, using 3D geometry as an inductive bias and enabled imagination-based rollouts that could be decoded into bird's-eye-view semantic representations \cite{Hu22MILE}. A number of E2E systems leverage LiDAR-only or LiDAR+RGB sensing, but to the best of our knowledge, there are no open source action-based baselines that output continuous control e.g. \cite{Mirzaie25InterpretableE2E}; other LiDAR-based action policies focus on steering-only prediction \cite{Wang21FlowDriveNet}. In parallel, a dominant line of modern E2E-AD predicts \emph{waypoint trajectories} and relies on a downstream controller to realize actions: \emph{ChauffeurNet} popularized waypoint prediction with controller tracking, \emph{Learning by Cheating} distilled privileged planning into vision-only waypoint policies, and \emph{TransFuser} further improved robustness by fusing multi-modal inputs with transformer-based architectures for trajectory/waypoint prediction \cite{Bansal19ChauffeurNet,Chen20LearningByCheating,Chitta22TransFuser}. Motivated by the need to represent multi-modal futures and uncertainty, diffusion models e.g., \emph{DDPM} and \emph{DDIM} \cite{Ho20DDPM,Song21DDIM} are increasingly adopted as waypoint/trajectory generators; inspired by robotic policies e.g., \emph{Diffusion Policy} \cite{Chi23DiffusionPolicy}, recent autonomous-driving methods leveraged diffusion-based decoders for trajectory planning  \cite{Liao24,Zhao25DiffE2E}. This growing skew toward waypoint outputs \cite{Chen24E2ESurvey} motivates mechanisms that bridge action-space policies and waypoint-based training/evaluation without redesigning benchmark protocols..

%% file: sec/3a_new_method.tex
\section{Expressing dynamical vehicle models in a common framework}
\label{sec:newmethod}

\hypbox{Problem setup.}{
Consider an ego vehicle $\mathcal{V}$ moving along a trajectory in the plane $\mathcal{P}\subset \mathbb{R}^{2}$. At each decision step, the vehicle executes a sequence of low-level actions over a control horizon spanning $C_{f}$ timesteps. \textbf{Can we accurately predict the vehicle's state after executing each action in this sequence?}

We model the driving process as a discrete-time system indexed by decision time $t \in \mathbb{N}$.
At each decision time $t$, an action-based end-to-end autonomous driving (E2E-AD) policy network $\mathcal{N}_\theta$ takes as input the current observation $o_t$ and a high-level command $c_t$ (e.g., turn left), and outputs a sequence of actions over a control horizon of length $C_f \in \mathbb{N}$:

\begin{equation}
\mathbf{a}_t
=
\left(a_{t,0}, a_{t,1}, \dots, a_{t,C_f-1}\right)
\in
\mathbb{R}^{C_{f} \times d_{u}},
\label{eq:raw_action_seq}
\end{equation}

where $d_u$ is the action dimension and $\Delta t > 0$ is the fixed control interval between consecutive actions. Each action $a_{t,k}$ corresponds to a ground-truth waypoint $w_{t,k+1}^{\text{gt}}\in\mathbb{R}^{2}$ representing the vehicle's actual position after executing that action. We collect these waypoints into the ground-truth trajectory:

\begin{equation}
\mathbf{w}_{t}^{\text{gt}}
=
\left(w_{t,1}^{\text{gt}}, w_{t,2}^{\text{gt}}, \dots, w_{t,C_f}^{\text{gt}}\right)
\in
\mathbb{R}^{C_{f} \times 2}.
\label{eq:gt_waypoint_seq}
\end{equation}
}

Dynamical vehicle models such as the Kinematic Bicycle Model (KBM) approximate vehicle motion using simplified representations that capture essential dynamics without requiring exhaustive physical parameters. However, different models use different state representations, control inputs, and integration methods, making it challenging to compare or combine them directly. To address this, we propose a unified framework that encapsulates various dynamical vehicle models as \emph{lifting operators}.

\subsection{Action-to-waypoint Lifting Operator}\label{subsec:lifting_operator}

In order to convert the action sequence $\mathbf{a}_t$ into a waypoint trajectory over the same horizon, we define the \emph{lifting operator}
\begin{equation}
\mathcal{F}_{\phi} :
\mathbb{R}^{C_{f} \times d_{u}} \times \mathcal{S} \longrightarrow
\mathbb{R}^{C_{f} \times 2},
\qquad
(\mathbf{a}_t, s_t) \mapsto \mathbf{w}_t,
\label{eq:general_lifting}
\end{equation}
where $\phi$ collects vehicle/model parameters and $s_t \in \mathcal{S}$ contains the minimal initial state required for propagation (e.g., initial speed). The output $\mathbf{w}_t = (w_{t,1}, \dots, w_{t,C_f})$ is a sequence of ego-frame waypoints, with $w_{t,k} \in \mathbb{R}^2$ representing the planar position at the end of the $k$-th interval.

Any lifting operator $\mathcal{F}_\phi$ can be decomposed into three modular, independently specifiable components:

\paragraph{Control Activation $\psi_\phi: \mathbb{R}^{d_u} \to \mathcal{U}$.} 
Maps raw network outputs $a_{t,k} \in \mathbb{R}^{d_u}$ to bounded vehicle controls $u_{t,k} \in \mathcal{U}$ via smooth activation functions (e.g., sigmoid). This ensures that predicted controls lie within the vehicle's operational envelope and enables stable gradient-based training.

\paragraph{Dynamics rollout $f_\phi: \mathcal{Z} \times \mathcal{U} \to \mathcal{Z}$.} 
Propagates the kinematic state $z_{t,k} \in \mathcal{Z}$ forward by one discrete time step $\Delta t$ using vehicle dynamics. For analytical models (KBM, CCPP), this is achieved by discretising continuous-time ODEs using a numerical solver (e.g., Euler or RK4). For learned models (MLP), the dynamics rollout is implicit in the network weights.

\paragraph{Pose projection $h: \mathcal{Z} \to \mathbb{R}^2$.} 
Extracts the ego-frame waypoint $(x, y) \in \mathbb{R}^2$ from the full state $z_{t,k}$, which may include auxiliary variables such as speed or curvature. 
Note $z_{t,0}$ is initialized from $s_t$ to satisfy ego-frame anchoring at $k=0$, i.e., $h(z_{t,0}) = (0, 0)$.

The complete forward pass from actions to waypoints is:

\begin{align}
\mathcal{F}_\phi(\mathbf{a}_t, s_t) &= \left(h(z_{t,1}), \ldots,h(z_{t,C_f})\right)
\label{eq:lifting_explicit}\\
 z_{t,k+1} &= f_\phi\left(z_{t,k}, \psi_\phi(a_{t,k}), \Delta t\right)
 \label{eq:next_rollout}
\end{align}

\paragraph{Modularity and extensibility.} 
The decomposition into $(\psi_\phi, f_\phi, h)$ offers two key advantages. First, individual components can be modified independently: changing the numerical integration scheme (e.g., Euler $\to$ RK4) requires no retraining, and replacing the dynamics model (e.g., KBM $\to$ CCPP) does not alter the training objective. Second, new motion models can be integrated by specifying their three components, provided they satisfy mild regularity conditions ensuring determinism and differentiability. This modular design enables systematic exploration of the design space and facilitates future improvements.

\begin{proposition}[Determinism and differentiability of lifting operators]
\label{prop:lifting-smooth}
Let $\mathcal{F}_\phi: \mathbb{R}^{C_f \times d_u} \times \mathcal{S} \to \mathbb{R}^{C_f \times 2}$ be a lifting operator decomposed as $\psi_\phi: \mathbb{R}^{d_u} \to \mathcal{U}$, $f_\phi: \mathcal{Z} \times \mathcal{U} \to \mathcal{Z}$, and $h: \mathcal{Z} \to \mathbb{R}^2$, where $\mathcal{U}$ is the control space, $\mathcal{Z}$ is the state space, and $\mathcal{S}$ is the initial state space.

Suppose: (a) $\psi_\phi$ is $C^1$ with bounded range (ensuring physical control limits), (b) $f_\phi$ is $C^1$ in both arguments, and (c) $h$ is $C^1$. Then for fixed initial state $s_t \in \mathcal{S}$, the lifting operator $\mathcal{F}_\phi(\cdot, s_t)$ is: (i) \emph{deterministic}: each action sequence $\mathbf{a}_t \in \mathbb{R}^{C_f \times d_u}$ produces a unique waypoint sequence $\mathbf{w}_t \in \mathbb{R}^{C_f \times 2}$, (ii) \emph{continuously differentiable}: $\mathcal{F}_\phi(\cdot, s_t) \in C^1(\mathbb{R}^{C_f \times d_u}, \mathbb{R}^{C_f \times 2})$.
\end{proposition}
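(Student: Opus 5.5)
The plan is to argue by induction on the rollout index $k$, using the recursion \eqref{eq:next_rollout} together with the explicit form \eqref{eq:lifting_explicit}. Fix $s_t\in\mathcal{S}$ and let $z_{t,0}=z_0(s_t)$ denote the initial kinematic state it determines. Since $s_t$ is held fixed, $z_{t,0}$ is a constant independent of $\mathbf{a}_t$. For each $k\in\{0,\dots,C_f\}$ I will regard the state as a map $Z_k:\mathbb{R}^{C_f\times d_u}\to\mathcal{Z}$, with $Z_0(\mathbf{a}_t)\equiv z_{t,0}$ and
\[
Z_{k+1}(\mathbf{a}_t)=f_\phi\bigl(Z_k(\mathbf{a}_t),\,\psi_\phi(\pi_k(\mathbf{a}_t))\bigr).
\]
Here $\pi_k(\mathbf{a}_t)=a_{t,k}$ is the coordinate projection onto the $k$-th row, and $\Delta t$ is a fixed parameter absorbed into $f_\phi$. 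I will take $\mathcal{Z}$ and $\mathcal{U}$ to be open subsets of Euclidean spaces, or at least sets on which the $C^1$ hypotheses make sense, so that the chain rule applies.

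For (i), determinism, I only need that each $Z_k$ is a well-defined single-valued function. $Z_0$ is constant. If $Z_k$ is a function, then $Z_{k+1}$ is a composition of the functions $f_\phi$, $\psi_\phi$ and $\pi_k$, hence a function too. Two points need checking here. First, $\psi_\phi(a_{t,k})$ must lie in $\mathcal{U}$, which is exactly the codomain assumption in (a); the bounded range ensures the control stays in the admissible envelope. Second, $f_\phi$ must map $\mathcal{Z}\times\mathcal{U}$ into $\mathcal{Z}$, so that the recursion never leaves the domain. Then $\mathcal{F}_\phi(\mathbf{a}_t,s_t)=(h(Z_1(\mathbf{a}_t)),\dots,h(Z_{C_f}(\mathbf{a}_t)))$ is uniquely determined, since no randomness enters anywhere.

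For (ii), I will prove by the same induction that $Z_k\in C^1$. $Z_0$ is constant, hence $C^1$. Each $\pi_k$ is linear, hence $C^\infty$, so $\psi_\phi\circ\pi_k$ is $C^1$ by (a). The map $\mathbf{a}_t\mapsto(Z_k(\mathbf{a}_t),\psi_\phi(\pi_k(\mathbf{a}_t)))$ is then $C^1$ into $\mathcal{Z}\times\mathcal{U}$, because a map into a product is $C^1$ exactly when its components are. Composing with $f_\phi$, which is jointly $C^1$ by (b), gives $Z_{k+1}\in C^1$ by the multivariable chain rule. By (c), each $h\circ Z_k$ is $C^1$, and a map into the product $\mathbb{R}^{C_f\times 2}$ is $C^1$ iff its components are, so $\mathcal{F}_\phi(\cdot,s_t)\in C^1$. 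As a by-product, the chain rule yields the backpropagation formula
\[
\frac{\partial Z_{k+1}}{\partial a_{t,j}}
=\partial_z f_\phi\,\frac{\partial Z_k}{\partial a_{t,j}}
+\mathbf{1}_{j=k}\,\partial_u f_\phi\,D\psi_\phi(a_{t,k}).
\]
I would state this formula to justify the gradient flow shown in Figure~\ref{fig:short}.

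The main obstacle is not the induction but the domain bookkeeping. One must ensure that the iterates stay inside the open set $\mathcal{Z}$ on which $f_\phi$ is $C^1$; this is where the bounded range of $\psi_\phi$ and the invariance $f_\phi(\mathcal{Z}\times\mathcal{U})\subseteq\mathcal{Z}$ are used. One must also note that joint $C^1$-ness of $f_\phi$ (b) is what the chain rule needs, and that separate differentiability in each argument would not suffice. I would also remark that the anchoring condition $h(z_{t,0})=(0,0)$ plays no role in the argument.
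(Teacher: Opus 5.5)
Your proposal is correct and follows essentially the same route as the paper: induction over the horizon using the one-step map $T_\phi(z,a)=f_\phi(z,\psi_\phi(a),\Delta t)$ (your $f_\phi$ composed with $(Z_k,\psi_\phi\circ\pi_k)$), closure of $C^1$ under composition, and then $h$ applied componentwise with finite concatenation. Your explicit domain bookkeeping (iterates staying in $\mathcal{Z}$, controls landing in $\mathcal{U}$) and the chain-rule gradient recursion are harmless refinements that the paper leaves implicit.
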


\begin{proof}
The result follows by composition of $C^1$ functions and induction over the control horizon. See Appendix~\ref{app:proof} for details.
\end{proof}

\begin{table*}[ht]
\caption{\textbf{Design space of action-to-waypoint lifting operators.} All instantiations implement $\mathcal{F}_\phi(\mathbf{a}_t, s_t)$ via three modular components: control activation $\psi_\phi$, dynamics rollout $f_\phi$, and pose projection $h$. This decomposition enables mix-and-match of design choices. Vehicle parameters (e.g., $L$, $\delta_{\max}$) are physical specifications, not hyperparameters. ``—'' denotes not applicable or implicit.}
\label{tab:design-space}
\vskip 0.1in
\centering
\resizebox{0.57\textwidth}{!}{
\begin{tabular}{@{}l@{\hspace{8pt}}r@{\hspace{12pt}}r@{\hspace{12pt}}r@{}}
\toprule
& $\mathbf{\kbmlift}$ & $\mathbf{\ccpplift}$ & $\mathbf{\mlplift}$ \\
\midrule
\multicolumn{4}{@{}l}{\textit{Control Activation} $\psi_\phi: \mathbb{R}^{d_u} \to \mathcal{U}$} \\[2pt]
\quad Action channels $a_{t,k}$ & $(\tau, \sigma, \beta)$ & $(\tau, \xi, \beta)$ & $(\tau, \sigma, \beta)$ \\
\quad Longitudinal activation & sigmoid & sigmoid & \textbf{learned} \\
\quad Lateral activation & tanh & tanh & \textbf{learned} \\
\quad Physical controls $u_{t,k}$ & $(a^{\text{lon}}, \delta)$ & $(a^{\text{lon}}, \varsigma)$ & — \\
\quad Lateral control & $\delta = \delta_{\max}\hat{\sigma}$ & $\varsigma = \varsigma_M \hat{\xi}$ & — \\
\midrule
\multicolumn{4}{@{}l}{\textit{State Space} $\mathcal{Z}$} \\[2pt]
\quad State variables $z_{t,k}$ & $(x, y, \theta, v)$ & $(x, y, \theta, \kappa)$ & implicit \\
\quad Auxiliary variable & speed $v$ [$\unit{m.s^{-1}}$] & curvature $\kappa$ [$\unit{m^{-1}}$] & — \\
\quad Initial state $s_t$ & $v_0$ & $(v_0, \kappa_0)$ & — \\
\midrule
\multicolumn{4}{@{}l}{\textit{Dynamics Rollout} $f_\phi: \mathcal{Z} \times \mathcal{U} \to \mathcal{Z}$} \\[2pt]
\quad Continuous dynamics & bicycle (Eq.~\ref{eq:kbm-ct-x}-\ref{eq:kbm-ct-v}) & clothoid (Eq.~\ref{eq:dxds_ccpp}-\ref{eq:dkds_ccpp}) & — \\
\quad Integration variable & time $t$ & arc-length $s$ & — \\
\quad Discretisation & Euler / RK4 & Euler / RK4 & — \\
\quad Substeps per $\Delta t$ & 1 & $n_{\text{int}}$ & 1 \\
\quad Saturations & none & $v \geq 0$, $|\kappa| \leq \kappa_M$ & — \\
\midrule
\multicolumn{4}{@{}l}{\textit{Pose Projection} $h: \mathcal{Z} \to \mathbb{R}^2$} \\[2pt]
\quad Readout & $h(x,y,\theta,v) = (x,y)$ & $h(x,y,\theta,\kappa) = (x,y)$ & identity \\
\bottomrule
\end{tabular}
}
\vskip -0.1in
\end{table*}

\subsection{Lifting Operator Instantiations}\label{subsec:lifting}
Table~\ref{tab:design-space} summarizes three instantiations of the lifting framework. We now detail their key distinguishing features.

\paragraph{KBM Lifting $\kbmlift$}
The Kinematic Bicycle Model approximates the vehicle as a single-track bicycle with state $z_{t,k} = (x, y, \theta, v)$ comprising position, heading, and speed. Action channels $(\tau, \sigma, \beta)$ represent throttle, steering, and brake inputs, transformed via sigmoid/tanh activations into longitudinal acceleration $a^{\text{lon}}$ and steering angle $\delta = \delta_{\max}\hat{\sigma}$. Time-based integration (Euler or RK4) propagates the dynamics with single substep per $\Delta t$. See Appendix~\ref{app:kbm_method} for the continuous dynamics (Eq.~\ref{eq:kbm-ct-x}-\ref{eq:kbm-ct-v}).

\paragraph{CCPP Lifting $\ccpplift$}
The Continuous Curvature Path Planner uses clothoid curves to ensure curvature continuity. Unlike KBM's time parameterization, CCPP integrates along arc-length $s$ with state $z_{t,k} = (x, y, \theta, \kappa)$ including curvature. Action channels $(\tau, \xi, \beta)$ produce sharpness $\varsigma = \varsigma_M\hat{\xi}$ (curvature rate). Multiple substeps per $\Delta t$ enhance smoothness, with saturations $v \geq 0$ and $|\kappa| \leq \kappa_M$ ensuring physical feasibility. See Appendix~\ref{app:ccpp_method} for the arc-length dynamics (Eq.~\ref{eq:dxds_ccpp}-\ref{eq:dkds_ccpp}).

\paragraph{MLP Lifting $\mlplift$}
The MLP operator directly learns the action-to-waypoint mapping via a multi-layer perceptron, bypassing explicit dynamics. While sacrificing physical guarantees, this data-driven approach may capture behaviors beyond simplified kinematic models. See Appendix~\ref{app:extraimpdet} for details.

\subsection{End-to-End Training with Waypoint Supervision}
\label{subsec:training}

We train the policy network $\mathcal{N}_\theta$ end-to-end using waypoint supervision. At each decision time $t$, the network predicts actions $\mathbf{a}_t$ from observations $o_t$ and commands $c_t$. These actions are passed through the differentiable lifting operator $\mathcal{F}_\phi$ to produce predicted waypoints $\mathbf{w}_t$, which are supervised against ground-truth waypoints $\mathbf{w}_t^{\text{gt}}$ from the dataset (Eq.~\ref{eq:gt_waypoint_seq}). Gradients flow back through the lifting operator to update the policy parameters $\theta$, while the vehicle model parameters $\phi$ remain fixed.

\paragraph{Waypoint loss.}
We supervise the predicted trajectory using a weighted $L_1$ loss over waypoint positions:
\begin{equation}
    \mathcal{L}_{\mathrm{wp}}
    \;=\;
    \frac{1}{\sum_{k=1}^{C_f}\alpha_k}
    \sum_{k=1}^{C_f}
    \alpha_k\,\bigl\|\hat{w}_{t,k} - w^{\mathrm{gt}}_{t,k}\bigr\|_{1},
\label{eq:training_obj}
\end{equation}
where $\hat{w}_{t,k}, w^{\mathrm{gt}}_{t,k} \in \mathbb{R}^2$ are the $k-$th predicted and ground-truth planar positions at time $t$, respectively, and $\alpha_k \geq 0$ are optional temporal weights (e.g., to emphasize near-term predictions). This position-only objective ignores heading errors, which are less critical for closed-loop control. We use $L_1$ for robustness to outliers. Although intermediate states include heading $\theta_{t,k}$, we exclude it from supervision to enable fair comparison with existing methods.
 We leave this direction for future work.

\paragraph{Training pipeline.}
Algorithm~\ref{alg:lifting} shows the complete forward pass. The policy outputs actions, the lifting operator unrolls them into a trajectory via the three-component decomposition $(\psi_\phi, f_\phi, h)$, and the loss compares predicted positions to ground truth. All components are differentiable, enabling standard backpropagation through the entire pipeline to optimize $\theta$.

\begin{algorithm}[t]
\small
\caption{Training Forward Pass}
\label{alg:lifting}
\begin{algorithmic}[1]
\REQUIRE Observation $o_t$, command $c_t$, initial state $s_t$, ground-truth $\mathbf{w}_t^{\text{gt}}$, \\
\hspace{1.6em} policy $\mathcal{N}_\theta$, lifting operator $\mathcal{F}_\phi = (\psi_\phi, f_\phi, h)$
\ENSURE Waypoint loss $\mathcal{L}_{\mathrm{wp}}$
\STATE $\mathbf{a}_t \gets \mathcal{N}_\theta(o_t, c_t)$ \hfill \textcolor{gray}{// Policy network}
\STATE $z_0 \gets \textsc{Init}(s_t)$ \hfill \textcolor{gray}{// Ego-frame state}
\FOR{$k = 0, \ldots, C_f - 1$}
    \STATE $u_k \gets \psi_\phi(a_{t,k})$ \hfill \textcolor{gray}{// Control activation}
    \STATE $z_{k+1} \gets f_\phi(z_k, u_k, \Delta t)$ \hfill \textcolor{gray}{// Dynamics rollout}
    \STATE $\hat{w}_{t,k+1} \gets h(z_{k+1})$ \hfill \textcolor{gray}{// Pose projection}
\ENDFOR
\STATE $\mathcal{L}_{\mathrm{wp}} \gets \textsc{WaypointLoss}(\hat{\mathbf{w}}_t, \mathbf{w}_t^{\text{gt}})$ \hfill \textcolor{gray}{// Eq.~\ref{eq:training_obj}}
\STATE \textbf{return} $\mathcal{L}_{\mathrm{wp}}$
\end{algorithmic}
\end{algorithm}

%% file: sec/4_experiments.tex
\section{Experiments}
\label{sec:experiments}

\subsection{Benchmarks}
\label{subsec:navsim_benchmark}

We primarily evaluate our proposed framework on \textbf{NAVSIM v1} \texttt{navtest} \cite{Dauner24NAVSIM} and \textbf{NAVSIM v2} \texttt{navhard} \cite{Cao25PseudoSimulation}. NAVSIM is a recent data-driven autonomous driving simulation and benchmarking framework built from large-scale real-world logs.

We further validate our approach in CARLA~\cite{Dosovitskiy17}, selecting two additional benchmarks: (i) \textbf{Bench2Drive}, where we report \emph{Dev10} results ~\cite{Jia24Bench2Drive,Jia25DriveTransformer}, and (ii) a \textbf{CARLA-based evaluation protocol}~\cite{Zhang21RLICoach,Porres24GuidingAttention}. Additional benchmark details are provided in the Appendix~\hyperref[app:extrabeninfo]{\ref*{app:extrabeninfo}}.

\subsection{Implementation Details}
\label{subsec:impl_details}

\paragraph{Training setup.} For $\kbmlift$, we use a fixed sampling interval $\Delta t=\qty{0.5}{\second}$, a maximum steering angle $\delta_{\max}=\qty{0.6}{\radian}$, longitudinal gain $a^{\mathrm{lon}}=1.0$ and a wheelbase $L=\qty{2.9}{\meter}$, which are kept constant across all experiments.

For $\ccpplift$, we also use a fixed sampling interval $\Delta t=\qty{0.5}{\second}$, initial curvature $\kappa_0=\qty{0}{\per\meter}$, curvature bound $\kappa_M=\qty{0.4}{\per\meter}$, sharpness bound $\varsigma_M=\qty{0.1}{\per\meter\squared}$, longitudinal gain $a^{\mathrm{lon}}=1.0$, and $n_{\mathrm{int}}=5$ uniform arc-length substeps per control interval.

Regarding the training objective (\ref{eq:training_obj}), we use $\alpha_k=1$ in all experiments. Further details are provided in Appendices~\ref{app:kbm_method},~\ref{app:ccpp_method}, and~\ref{app:extraimpdet}.

\begin{table*}[t]
\small
\centering
\caption{\textbf{Training-dynamics robustness (Euler vs RK4): Top-$k$ Driving Score (DS).}
For each model, we select the top-$k$ checkpoints and report the mean $\pm$ standard deviation of DS for $k\in\{3,5,10\}$.
For each entry, we additionally report the absolute standard-deviation reduction from Euler to RK4 at the same $k$
(i.e., $\Delta \sigma := \sigma_{\text{Euler}}-\sigma_{\text{RK4}}$).
}
\label{tab:topk_ds_euler_vs_rk4_euler2rk4_stddec}

\setlength{\tabcolsep}{6.5pt}
\renewcommand{\arraystretch}{1.05}

\begin{tabular}{lcccccc}
\toprule
\multirow{2}{*}{\textbf{Model}} &
\multicolumn{3}{c}{\textbf{Euler DS} $\uparrow$} &
\multicolumn{3}{c}{\textbf{RK4 DS} $\uparrow$} \\
\cmidrule(lr){2-4}\cmidrule(lr){5-7}
& \textbf{Top-3} & \textbf{Top-5} & \textbf{Top-10}
& \textbf{Top-3} & \textbf{Top-5} & \textbf{Top-10} \\
\midrule
MILE
& $48.5^{\pm 3.3}$ & $46.5^{\pm 3.7}$ & $42.8^{\pm 4.7}$
& $48.5^{\pm 3.3}$ & $46.5^{\pm 3.7}$ & $42.8^{\pm 4.7}$ \\

\quad w/ $\kbmlift$
& $57.3^{\pm 2.0}$ & $55.7^{\pm 2.7}$ & $52.9^{\pm 3.6}$
& $67.9^{\pm 1.4}$ (\stddec{0.6}) & $66.7^{\pm 2.0}$ (\stddec{0.7}) & $64.3^{\pm 3.1}$ (\stddec{0.5}) \\

\quad w/ $\ccpplift$
& $59.2^{\pm 4.5}$ & $57.0^{\pm 4.4}$ & $53.1^{\pm 5.0}$
& $57.5^{\pm 2.1}$ (\stddec{2.4}) & $56.5^{\pm 2.0}$ (\stddec{2.4}) & $54.2^{\pm 3.6}$ (\stddec{1.4}) \\

\midrule
CIL++
& $32.6^{\pm 8.9}$ & $26.9^{\pm 11.0}$ & $13.7^{\pm 15.7}$
& $32.6^{\pm 8.9}$ & $26.9^{\pm 11.0}$ & $13.7^{\pm 15.7}$ \\

\quad w/ $\kbmlift$
& $63.5^{\pm 2.6}$ & $62.5^{\pm 2.3}$ & $60.1^{\pm 3.5}$
& $62.8^{\pm 0.8}$ (\stddec{1.8}) & $62.3^{\pm 0.9}$ (\stddec{1.4}) & $59.6^{\pm 3.3}$ (\stddec{0.2}) \\

\quad w/ $\ccpplift$
& $68.1^{\pm 6.3}$ & $65.8^{\pm 5.5}$ & $61.1^{\pm 6.8}$
& $67.2^{\pm 0.6}$ (\stddec{5.7}) & $66.2^{\pm 1.5}$ (\stddec{4.0}) & $63.4^{\pm 4.3}$ (\stddec{2.5}) \\

\bottomrule
\end{tabular}
\end{table*}

\begin{table}[t]
\small
\caption{\textbf{Closed-loop performance of our RK4-based framework in Bench2Drive.} We report Bench2Drive \emph{Dev10} results with and without our RK4-based framework, averaging closed-loop metrics over three evaluation seeds. For the best checkpoint, we report mean $\pm$ standard deviation for Driving Score (\textbf{DS} $\uparrow$) and Route Completion (\textbf{RC} $\uparrow$).}
\centering
\label{tab:kbm_dev10_rk4}
\begin{tabular}{lll}
\toprule
\textbf{Model} & \textbf{DS} $\uparrow$ & \textbf{RC} $\uparrow$ \\
\midrule
MILE                 & $51.7^{\pm 1.2}$ & $56.6^{\pm 1.2}$ \\
\quad w/ $\kbmlift$           & $67.7^{\pm 1.2}$ (\improve{16.0}) & $84.9^{\pm 0.0}$(\improve{28.3})\\
\quad w/ $\ccpplift$          & $58.6^{\pm 1.2}$ (\improve{6.9}) & $80.3^{\pm 0.0}$(\improve{23.7})\\
\midrule
CIL++                & $43.6^{\pm 1.5}$ & $52.0^{\pm 1.0}$ \\
\quad w/ $\kbmlift$          & $62.8^{\pm 0.8}$(\improve{19.2}) & $83.9^{\pm 0.1}$(\improve{31.9}) \\
\quad w/ $\ccpplift$          & $66.9^{\pm 1.6}$(\improve{23.3}) & $83.8^{\pm 0.0}$(\improve{31.8}) \\
\bottomrule
\end{tabular}
\end{table}

\subsection{Quantitative Results on NAVSIM}
\label{subsec:navsim_results}

\begin{table*}[t]
\small
\centering
\caption{\textbf{RK4-based framework integration on NAVSIM \texttt{navtest}.} 
We take vision-only, action policies and map their action predictions into waypoints via our framework instantiations: KBM and CCPP.
The \emph{Prediction} column indicates the model's output space (\textit{waypoints} vs.\ \textit{actions}). 
We report NAVSIM \texttt{navtest} six non-reactive open-loop metrics: number of collisions (NC), distance-to-all-collisions (DAC), time-to-collision (TTC), comfort (Comf.), efficiency penalty (EP), and the planning-driven metric score (PDMS) \cite{Dauner24NAVSIM}.
In addition to the CV and Ego status MLP baselines, we report results for \emph{LAW}, the current state-of-the-art vision-only method on NAVSIM \texttt{navtest}.}
\label{tab:navsim_kbm_rk4}
\begin{tabular}{lcccccccc}
\toprule
\textbf{Model} & \textbf{Lifting} & \textbf{Prediction} &
\textbf{NC} $\uparrow$ & \textbf{DAC} $\uparrow$ & \textbf{TTC} $\uparrow$ &
\textbf{Comf.} $\uparrow$ & \textbf{EP} $\uparrow$ & \textbf{PDMS} $\uparrow$ \\
\midrule
\rowcolor{gray!10} CV \cite{Dauner24NAVSIM}   & — & waypoints & $68.0$ & $57.8$ & $50.0$ & $100$ & $19.4$ & $20.6$ \\
\rowcolor{gray!10} Ego MLP \cite{Dauner24NAVSIM}        & — & waypoints & $93.0$ & $77.3$ & $83.6$ & $100$ & $62.8$ & $65.6$ \\
\rowcolor{gray!10} LTF \cite{Chitta22TransFuser} & — & waypoints & $97.4$ & $92.8$ & $92.4$ & $100$ & $79.0$ & $83.8$ \\
\rowcolor{gray!10} UniAD \cite{Hu2022PlanningorientedAD} & — & waypoints & 97.8 & 91.9 & 92.9 & 100 & 78.8 & 83.4 \\
\rowcolor{gray!10} LAW \cite{Li25LatentWorldModel} & — & waypoints & $96.4$ & $95.4$ & $88.7$ & $99.9$ & $81.7$ & $84.6$ \\
\midrule
\multirow{3}{*}{CIL++ \cite{Xiao2023cilpp}} & $\mlplift$ & waypoints & $96.5$ & $94.6$ & $91.8$ & $100$ & $79.3$ & $84.5$ \\
  & $\kbmlift$ & actions & $95.6$ & $94.7$ & $89.8$ & $100$ & $78.3$ & $83.3$ \\
  & $\ccpplift$ & actions & $95.9$ & $93.7$ & $90.1$ & $100$ & $77.4$ & $82.5$ \\
\midrule
\multirow{3}{*}{MILE \cite{Hu22MILE}} & $\mlplift$ & waypoints & $96.7$ & $93.7$ & $91.7$ & $100$ & $78.0$ & $83.5$ \\
 & $\kbmlift$ & actions & $95.5$ & $94.3$ & $89.8$ & $100$ & $77.4$ & $82.7$ \\
 & $\ccpplift$ & actions & $96.1$ & $93.8$ & $90.6$ & $100$ & $77.5$ & $82.8$ \\
\bottomrule
\end{tabular}
\end{table*}

\paragraph{Benchmarks and metrics.} On \texttt{navtest}, a \emph{non-reactive} benchmark where the ego plan and the environment are decoupled (i.e., other agents do not respond to the ego vehicle), open-loop performance is quantified by the \emph{Planning Discrepancy Metric Score} (PDMS) and its component metrics (NC, DAC, TTC, comfort, EP)~\cite{Dauner24NAVSIM}. On \texttt{navhard}, a \emph{reactive} benchmark where background traffic can respond to the ego, evaluation follows the NAVSIM v2 pseudo-simulation protocol with two stages (S1/S2) and is quantified by the \emph{Extended Predictive Driver Model Score} (EPDMS) and its safety/compliance, progress, lane-keeping, and comfort subscores~\cite{Cao25PseudoSimulation}.

\paragraph{Compared methods.}
We assess the performance of our approach by coupling two strong vision-only \emph{action} policies, \textbf{CIL++}~\cite{Xiao2023cilpp} and \textbf{MILE}~\cite{Hu22MILE}, to our differentiable framework instantiations: KBM and CCPP. In all cases, the policies predict an action horizon and our framework maps actions to a $K$-step ego-frame waypoint trajectory for supervision and evaluation. To isolate the benefit of our framework from the benefit of simply predicting in the waypoint space, we also couple the action-based baselines with an MLP that maps predicted actions to waypoints.

\paragraph{Results on \texttt{navtest} (NAVSIM v1).}
Tab.~\ref{tab:navsim_kbm_rk4} reports open-loop \texttt{navtest} performance. Among our action-based variants, CIL++ w/ KBM achieves the highest PDMS ($83.3$), outperforming CIL++ w/ CCPP ($82.5$) and narrowing the gap to strong vision-only waypoint predictors: it is within $1.5\%$ of the state-of-the-art LAW ($84.6$) and within $0.6\%$ of Latent TransFuser ($83.8$).
For MILE, both heads are competitive ($81.0$ with KBM and $82.3$ with CCPP). For both CIL++ and MILE, our MLP controller slightly outperforms our framework instantiations on \texttt{navtest}. Since \texttt{navtest} is non-reactive, a MLP action-to-waypoint controller can slightly outperform our KBM/CCPP instantiations because evaluation reduces to open-loop trajectory scoring under a fixed scene evolution. For further discussion of \texttt{navtest} results, see Appendix~\hyperref[sec:navsim_navtest_additional]{\ref*{sec:navsim_navtest_additional}}.

\begin{table*}[t]
\centering
\small
\setlength{\tabcolsep}{4pt}
\renewcommand{\arraystretch}{1.05}
\caption{\textbf{RK4-based framework integration on NAVSIM \texttt{navhard}.} We report NAVSIM \texttt{navhard} metrics from the NAVSIM v2 pseudo-simulation protocol~\cite{Cao25PseudoSimulation}: the overall extended predictive driver model score (EPDMS) and its subscores, covering safety and compliance (no at-fault collisions, NC; drivable area compliance, DAC; traffic light compliance, TLC), progress (ego progress, EP), collision risk (time to collision, TTC), lane keeping (LK), and comfort (history comfort, HC; extended comfort, EC). For reference, we also include the Constant Velocity, Ego MLP and Latent TransFuser models \cite{Chitta22TransFuser,Cao25PseudoSimulation}.}
\label{tab:navhard_framework_rk4}
\begin{tabular}{c c | c c c | c c c | c c c}
\toprule
\multirow[c]{2}{*}{\textbf{Metric}} &
\multirow[c]{2}{*}{\textbf{Stage}} &
\multirow[c]{2}{*}{\textbf{CV}} &
\multirow[c]{2}{*}{\textbf{Ego MLP}} &
\multirow[c]{2}{*}{\textbf{LTF}} &
\multicolumn{3}{c|}{\textbf{CIL++}~\cite{Xiao2023cilpp}} &
\multicolumn{3}{c}{\textbf{MILE}~\cite{Hu22MILE}} \\
\cmidrule(lr){6-8}\cmidrule(lr){9-11}
& & & & &
\textbf{w/ $\mlplift$} & \textbf{w/ $\kbmlift$} & \textbf{w/ $\ccpplift$} &
\textbf{w/ $\mlplift$} & \textbf{w/ $\kbmlift$} & \textbf{w/ $\ccpplift$} \\
\midrule

\multirow{2}{*}{NC$\uparrow$}  & S1 & 88.8 & 93.2 & \textbf{96.2}
& 94.1 & 95.7 & \textbf{96.2}
& 95.9 & 95.1 & 95.6 \\
                              & S2 & \textbf{83.2} & 77.2 & 77.7
& 76.9 & 76.9 & 79.2
& 79.3 & 80.6 & 79.8 \\
\midrule

\multirow{2}{*}{DAC$\uparrow$} & S1 & 42.8 & 55.7 & 79.5
& 80.0 & \textbf{84.0} & 78.2
& 70.9 & 80.9 & 81.1 \\
                              & S2 & 59.1 & 51.9 & 70.2
& 56.9 & 62.4 & 69.9
& 71.6 & \textbf{75.3} & 73.6 \\
\midrule

\multirow{2}{*}{DDC$\uparrow$} & S1 & 70.6 & 86.6 & \textbf{99.1}
& 97.6 & 97.9 & 97.1
& 98.1 & 97.3 & 96.8 \\
                              & S2 & 76.5 & 74.4 & 84.2
& 76.5 & 82.6 & 83.8
& 85.2 & \textbf{86.6} & 86.2 \\
\midrule

\multirow{2}{*}{TLC$\uparrow$} & S1 & 99.3 & 99.3 & 99.5
& \textbf{99.6} & \textbf{99.6} & \textbf{99.6}
& \textbf{99.6} & \textbf{99.6} & \textbf{99.6} \\
                              & S2 & 98.0 & 98.2 & 98.0
& \textbf{98.5} & 97.8 & 98.2
& 98.1 & \textbf{98.5} & 98.3 \\
\midrule

\multirow{2}{*}{EP$\uparrow$}  & S1 & 77.5 & 81.2 & 84.1
& 85.4 & \textbf{85.5} & 85.2
& 84.3 & 83.7 & 84.1 \\
                              & S2 & 71.3 & 77.1 & 85.1
& 88.5 & 88.7 & 88.1
& 87.9 & 88.5 & \textbf{88.8} \\
\midrule

\multirow{2}{*}{TTC$\uparrow$} & S1 & 87.3 & 92.2 & \textbf{95.1}
& 92.9 & 93.6 & 94.7
& 92.2 & 94.2 & 93.8 \\
                              & S2 & \textbf{81.1} & 75.0 & 75.6
& 73.7 & 73.4 & 76.3
& 75.9 & 76.2 & 76.1 \\
\midrule

\multirow{2}{*}{LK$\uparrow$}  & S1 & 78.6 & 83.5 & 94.2
& 93.6 & 94.9 & 94.7
& 92.9 & \textbf{95.8} & \textbf{95.8} \\
                              & S2 & 47.9 & 40.8 & 45.4
& 46.3 & 48.3 & 49.2
& 49.8 & 47.7 & \textbf{50.9} \\
\midrule

\multirow{2}{*}{HC$\uparrow$}  & S1 & 97.1 & 97.5 & 97.5
& \textbf{97.8} & 97.6 & \textbf{97.8}
& 97.6 & \textbf{97.8} & 97.6 \\
                              & S2 & 97.1 & \textbf{97.8} & 95.7
& 96.3 & 95.7 & 97.0
& 95.9 & 96.4 & 96.9 \\
\midrule

\multirow{2}{*}{EC$\uparrow$}  & S1 & 60.4 & 77.7 & \textbf{79.1}
& 68.9 & 73.8 & 69.8
& 74.7 & 66.7 & 70.7 \\
                              & S2 & 61.9 & \textbf{79.8} & 75.9
& 60.4 & 60.4 & 61.1
& 65.5 & 59.2 & 65.0 \\
\midrule

\multicolumn{2}{c|}{\textbf{EPDMS$\uparrow$}}
& 10.9 & 12.7 & 23.1
& 21.0 & \cellcolor{blue!15}27.3 & \cellcolor{blue!15}27.3
& 24.2 & \cellcolor{blue!30}28.2 & \cellcolor{blue!50}29.5 \\
\bottomrule
\end{tabular}%

\end{table*}

\paragraph{Results on \texttt{navhard} (NAVSIM v2).}
Tab.~\ref{tab:navhard_framework_rk4} shows that integrating our framework yields strong performance in the reactive pseudo-simulation setting for both CIL++ and MILE. The best overall result is obtained by MILE w/ CCPP (EPDMS $29.5$), followed by MILE w/ KBM (EPDMS $28.2$), while both CIL++ instantiations reach EPDMS $27.3$. All framework-coupled variants outperform LTF, the strongest vision-only baseline~\cite{Cao25PseudoSimulation}. In the reactive pseudo-simulation setting of \texttt{navhard}, the MLP controllers fall behind our framework instantiations. This contrasts with \texttt{navtest}, where the non-reactive open-loop protocol allowed MLP controllers to slightly outperform KBM/CCPP by optimizing the fixed-scene trajectory scoring. In \texttt{navhard}, however, the ego plan influences the future scene and errors compound through interaction, so enforcing kinematic structure becomes beneficial. For further discussion on \texttt{navhard} results, see Appendix~\hyperref[sec:navsim_navhard_additional]{\ref*{sec:navsim_navhard_additional}}.

\subsection{Quantitative results on Bench2Drive}
\label{subsec:b2d}

\paragraph{Training dynamics.}
To evaluate the closed-loop performance and training dynamics of our framework, we conduct additional experiments on Bench2Drive~\cite{Jia24Bench2Drive}. Specifically, we evaluate our models on \emph{Dev10}, a validation set of $10$ of the most challenging and diverse routes in the benchmark~\cite{Jia25DriveTransformer}. We report the Driving Score (DS) and Route Completion (RC)~\cite{Jia24Bench2Drive}.

Considering the top-$k$ checkpoints offers an insight into the robustness of the model across training. Tab.~\ref{tab:topk_ds_euler_vs_rk4_euler2rk4_stddec} shows that, while Euler-based KBM and CCPP instantiations already improve Driving Score (DS) over the baselines, replacing Euler with RK4 further stabilizes training dynamics.

\paragraph{RK4 stabilises vehicle-model training dynamics.}
Across both CIL++ and MILE couplings, RK4 consistently reduces variance relative to Euler at fixed Top-$k$, indicating more stable training dynamics. The effect is most pronounced for CCPP, where RK4 yields large dispersion drops for CIL++ (Top-3/5/10: \(\Delta\sigma=5.7/4.0/2.5\)) and also stabilizes MILE (Top-3/5/10: \(\Delta\sigma=2.4/2.4/1.4\)). KBM is already comparatively stable under Euler and still benefits from RK4 with consistent, smaller variance reductions (CIL++: \(\Delta\sigma=1.8/1.4/0.2\); MILE: \(\Delta\sigma=0.6/0.7/0.5\)), while mean DS changes remain modest.

\paragraph{Best-checkpoint evaluation with multiple seeds.}
For each method, we select the best \emph{Dev10} checkpoint and re-evaluate it under three random evaluation seeds; Table~\ref{tab:kbm_dev10_rk4} reports mean\,$\pm$\,standard deviation. Overall, both framework instantiations consistently improve closed-loop performance: for MILE, KBM yields large gains in both DS and RC, while CCPP also improves DS and achieves strong completion; for CIL++, the improvement is even larger. In this RK4 setting, KBM remains the most stable option for CIL++ (lowest DS variance), whereas CCPP attains the highest mean DS for CIL++ while maintaining competitive RC. Compared to the Euler counterparts, these RK4 results exhibit overall lower variance across evaluation seeds (see Appendix~\hyperref[app:extrab2d_euler]{\ref*{app:extrab2d_euler}} for more details).

\subsection{Correlation Study}
\label{subsec:correlation}

\begin{figure}[t]
\centering
\scalebox{0.85}{
\begin{tikzpicture}
\begin{axis}[
    width=\columnwidth,
    height=0.85\columnwidth,
    xlabel={Horizon time (s)},
    xticklabel={
        \pgfmathparse{\tick/2}
        \pgfmathprintnumber{\pgfmathresult}
    },
    ylabel={Pearson correlation $\rho$},
    xmin=1, xmax=40,
    grid=both,
    legend columns=2,
    legend style={
        at={(0.5,1.05)},
        anchor=south,
        font=\footnotesize,
        draw=none,
        fill=none
    },
    cycle list name=color list,
    every axis plot/.append style={thick},
    multilane/.style={blue},
    singlelane/.style={red},
    metric-l1/.style={solid},
    metric-steer/.style={densely dotted},
    metric-kbm/.style={
        solid,
        mark=triangle*,
        mark options={scale=0.8},
        mark repeat=3
    },
    metric-ccpp/.style={
        solid,
        mark=square*,
        mark options={scale=0.7},
        mark repeat=3
    },
]

\addplot[multilane,metric-l1] coordinates {(1,0.687904690266042) (40,0.687904690266042)};
\addlegendentry{MultiLane -- L1}

\addplot[multilane,metric-steer] coordinates {(1,0.051409723116587) (40,0.051409723116587)};
\addlegendentry{MultiLane -- L1 (Steering)}

\addplot[multilane,metric-kbm] coordinates {
(1,-0.005216598987163)
(2,0.006967540413294)
(3,0.01263134514289)
(4,0.010007298406435)
(5,0.005488351217499)
(6,0.000823364528393)
(7,0.002911213404821)
(8,-3.79169228562834e-06)
(9,-0.005946071659687)
(10,-0.013721803825944)
(11,-0.026032106035552)
(12,-0.034754143207639)
(13,-0.035650491822124)
(14,-0.036092891114494)
(15,-0.031499407015168)
(16,-0.024691538444959)
(17,-0.02771486645414)
(18,-0.030064918627983)
(19,-0.034850792045558)
(20,-0.040478655870938)
(21,-0.04676223415791)
(22,-0.046218169989025)
(23,-0.047263189476371)
(24,-0.046408917468633)
(25,-0.047835320683873)
(26,-0.051167533584882)
(27,-0.060448399068618)
(28,-0.069590872662092)
(29,-0.074617844762124)
(30,-0.081504116214178)
(31,-0.084363030045331)
(32,-0.085420229352468)
(33,-0.086588507347248)
(34,-0.08505887839323)
(35,-0.084275132065246)
(36,-0.079520193632641)
(37,-0.077863046293716)
(38,-0.07345782267481)
(39,-0.069714048738683)
(40,-0.064387617346767)
};
\addlegendentry{MultiLane -- L1 KBM}

\addplot[multilane,metric-ccpp] coordinates {
(1,0.178825749125307)
(2,-0.034718258664382)
(3,-0.145806607130445)
(4,-0.185135742414827)
(5,-0.202103389247085)
(6,-0.214706191686161)
(7,-0.212394220872444)
(8,-0.180886189149511)
(9,-0.140406531351992)
(10,-0.113568034859843)
(11,-0.077971616677738)
(12,-0.006101480587095)
(13,0.082943761419688)
(14,0.139188715001293)
(15,0.166698458896956)
(16,0.177401111421421)
(17,0.174988238448637)
(18,0.173463757725916)
(19,0.175971230121805)
(20,0.17716197100517)
(21,0.180081525185208)
(22,0.181139784582724)
(23,0.178071113015293)
(24,0.162713025310973)
(25,0.151845865566486)
(26,0.14399547583628)
(27,0.139088883486228)
(28,0.136336790470824)
(29,0.132840195511387)
(30,0.13526728090796)
(31,0.125363751357052)
(32,0.120914982108508)
(33,0.112943633860526)
(34,0.116058396700669)
(35,0.117970417152507)
(36,0.126620295652144)
(37,0.131017377955558)
(38,0.134066471311974)
(39,0.128866778991597)
(40,0.119689827448825)
};
\addlegendentry{MultiLane -- L1 CCPP}

\addplot[singlelane,metric-l1] coordinates {(1,0.92419223443324) (40,0.92419223443324)};
\addlegendentry{SingleLane -- L1}

\addplot[singlelane,metric-steer] coordinates {(1,-0.660777685096442) (40,-0.660777685096442)};
\addlegendentry{SingleLane -- L1 (Steering)}

\addplot[singlelane,metric-kbm] coordinates {
(1,0.339867031928956)
(2,0.36626959974344)
(3,0.334981174908959)
(4,0.263766263790751)
(5,0.194002279212148)
(6,0.126378017319547)
(7,0.052512161821695)
(8,-0.040670217968943)
(9,-0.120817339863242)
(10,-0.209365744886625)
(11,-0.258980950947761)
(12,-0.338890430428074)
(13,-0.427762188236575)
(14,-0.507434481411884)
(15,-0.553498270940212)
(16,-0.603006118215095)
(17,-0.646214375971647)
(18,-0.685282341630431)
(19,-0.711882234170858)
(20,-0.728343194857859)
(21,-0.739035307246284)
(22,-0.738642226243367)
(23,-0.747203900291887)
(24,-0.742847435662993)
(25,-0.754199231363035)
(26,-0.771695917966103)
(27,-0.776838177617514)
(28,-0.780118074697264)
(29,-0.785128988048583)
(30,-0.794426523341731)
(31,-0.785350003526416)
(32,-0.785589531283719)
(33,-0.797793082100875)
(34,-0.806811023737102)
(35,-0.807662624710743)
(36,-0.803489125007367)
(37,-0.80691095250989)
(38,-0.807949411466418)
(39,-0.800520934341708)
(40,-0.794201165165454)
};
\addlegendentry{SingleLane -- L1 KBM}

\addplot[singlelane,metric-ccpp] coordinates {
(1,0.812169082018743)
(2,0.271653356755063)
(3,-0.471425645052108)
(4,-0.715686132435077)
(5,-0.789888970813034)
(6,-0.781748081174789)
(7,-0.794044983729209)
(8,-0.797913409418601)
(9,-0.748559533313734)
(10,-0.730417931391131)
(11,-0.773122050296923)
(12,-0.778614758679248)
(13,-0.770396069264873)
(14,-0.762320030480872)
(15,-0.712653711842522)
(16,-0.657750423033333)
(17,-0.690349746186384)
(18,-0.659535092838715)
(19,-0.60334765193298)
(20,-0.57652428453659)
(21,-0.584836680812861)
(22,-0.60220429191826)
(23,-0.613282190602624)
(24,-0.655869149368275)
(25,-0.675264598287315)
(26,-0.684466417903294)
(27,-0.705137956192274)
(28,-0.683707130129439)
(29,-0.67611926440198)
(30,-0.688300511208726)
(31,-0.663364354698524)
(32,-0.610814562464182)
(33,-0.591243476589201)
(34,-0.576671142697119)
(35,-0.552088439327964)
(36,-0.57524513140982)
(37,-0.587832758270201)
(38,-0.622190471320797)
(39,-0.626424243446261)
(40,-0.643495693995837)
};
\addlegendentry{SingleLane -- L1 CCPP}

\end{axis}
\end{tikzpicture}
}
\caption{Pearson correlation of different metric errors of different CIL++ models}
\label{fig:correlation}
\end{figure}
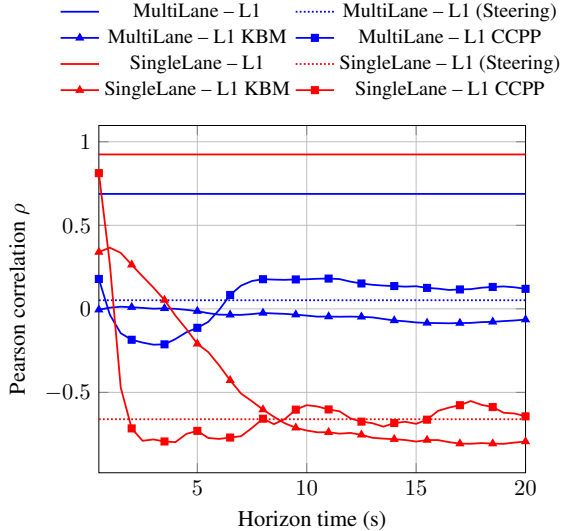

We study how offline action and waypoint errors correlate with closed-loop \emph{Success Rate} (SR). Following the subsequent analysis protocol of CIL++ by Porres \emph{et al.} \yrcite{Porres24GuidingAttention}, we take the CIL++ model trained on each of the two proposed datasets, single-lane and multi-lane \cite{Xiao2023cilpp}. Concretely, we run inference on the corresponding Roach validation split \cite{Porres24GuidingAttention} to collect predicted low-level actions (throttle, steer, brake) together with their ground-truth targets.

Figure~\ref{fig:correlation} reports the Pearson correlation between closed-loop Success Rate (SR) and different error metrics. Since both predictions and ground truth are available only in the action space (throttle, steer, brake), we additionally lift each action sequence to a waypoint trajectory using the corresponding kinematic model (KBM or CCPP), and compute the resulting waypoint-level losses between predicted and ground-truth rollouts. Overall, losses built on our framework rollouts (KBM/CCPP) align better with downstream closed-loop performance than action-space losses (e.g., steering or acceleration-only), because they measure the resulting trajectory error rather than isolated per-step control mismatches. Further details about the correlation study can be found in Appendix~\hyperref[app:extracorrst]{\ref*{app:extracorrst}}

\subsection{Numerical Integration and Error Analysis}
\label{subsec:numerics}

To understand why CCPP with RK4 excels in closed-loop evaluation, we analyze the numerical error introduced by different integration schemes on the validation set.

Figure~\ref{fig:cf8_2hz_xy_error} shows trajectory error for $C_f=8$ at $\Delta t = 0.5$s (2Hz). CCPP consistently maintains lower error throughout the horizon, particularly for distant waypoints where KBM's error grows rapidly. This advantage stems from CCPP's curvature-continuous representation and multiple substeps per interval ($n_{\text{int}}$), which provide better accuracy even at coarse temporal resolution. While larger $\Delta t$ increases error for all models, CCPP degrades more gracefully. At high framerates (10Hz), KBM-Euler achieves comparable error to CCPP-RK4, but this open-loop analysis underestimates error accumulation in closed-loop settings where predictions influence future observations.

These findings explain CCPP-RK4's superior performance in reactive pseudo-simulation (\texttt{navhard}) and closed-loop evaluation (Bench2Drive), where trajectory errors compound over time. The additional computation proves worthwhile when accurate long-horizon prediction is critical. See Appendix~\ref{app:numerical_analysis} for a comprehensive analysis across different horizons and framerates.

\begin{figure}[ht]
    \centering
    \includegraphics[width=0.9\columnwidth]{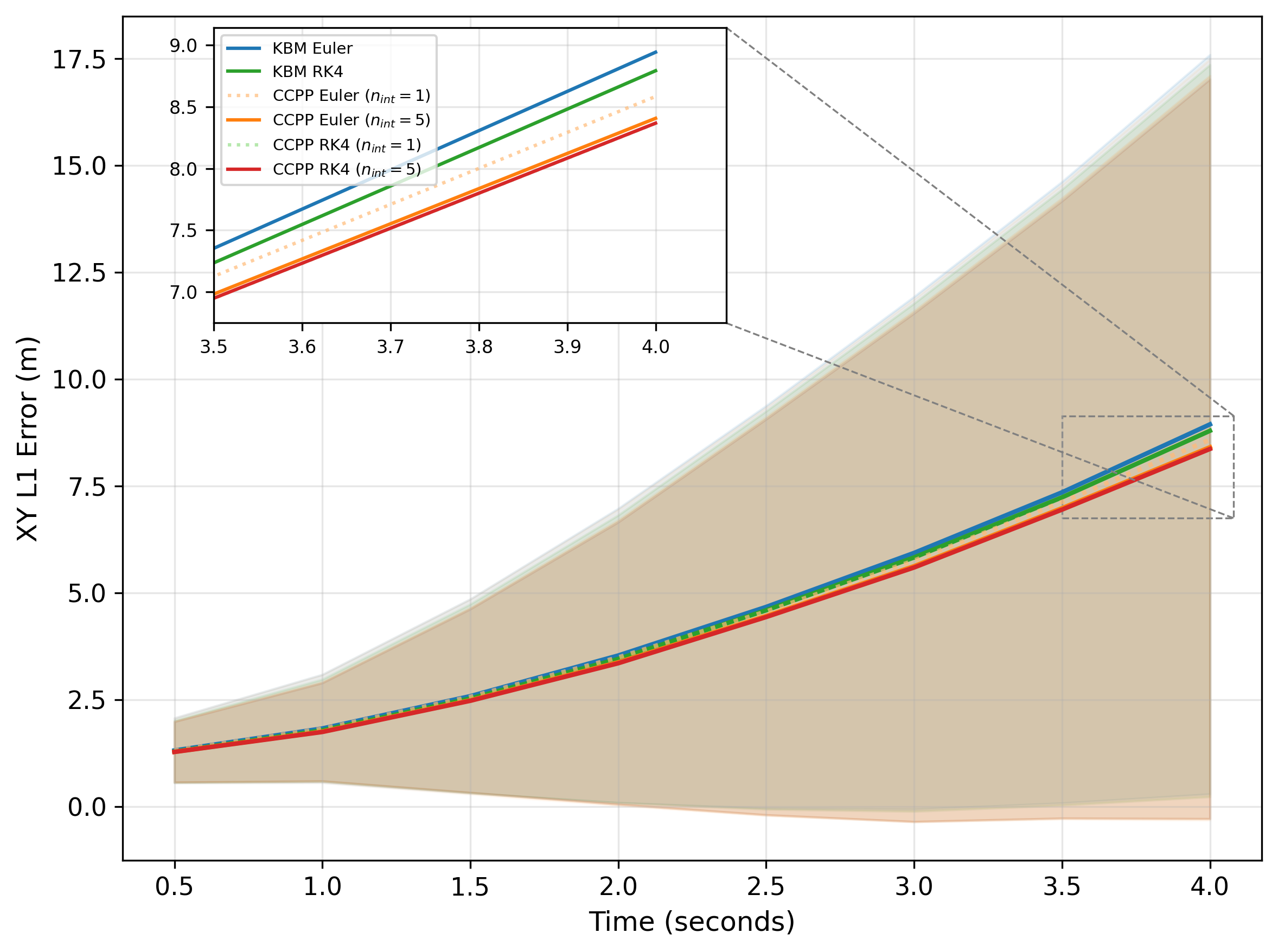}
    \caption{\textbf{Numerical error vs. waypoint.} Average and std. dev. of the $L_1$ waypoint error (meters) for $C_f=8$, $\Delta t = 0.5$s. CCPP maintains lower error throughout the horizon.}
    \label{fig:cf8_2hz_xy_error}
\end{figure}

%% file: sec/5_conclusions.tex
\section{Conclusions}
\label{sec:conclusions}

We proposed a differentiable framework that bridges \emph{action-based} end-to-end driving policies and \emph{waypoint-only} benchmarks by rolling out predicted controls into ego-frame waypoint trajectories. Across NAVSIM and Bench2Drive, both instantiations consistently improve both open-loop and closed-loop performance, as well as training dynamics. Finally, our framework-based offline waypoint $L_1$ error provides a stronger correlation, supporting more reliable checkpoint selection than action-space losses.

\paragraph{Limitations.} Our formulation assumes planar vehicle motion in $\mathbb{R}^2$ and therefore neglects 3D effects such as grade, banking, and suspension dynamics. In addition, our CCPP rollout is initialised with a simple nominal state (typically straight motion with $\kappa_0=0$), which may be suboptimal in scenarios with strong initial curvature or complex manoeuvre onsets.

\section*{Impact Statement}
This paper presents work whose goal is to advance the field of machine learning. There are many potential societal consequences of our work, none of which we feel must be specifically highlighted here.

%% file: sec/6_appendix.tex
\newpage
\appendix

\onecolumn
\begin{center}
{\Large\bfseries APPENDIX}
\end{center}
\vspace{0.5em}

\setcounter{section}{0}

\renewcommand{\thesection}{\Alph{section}}
\renewcommand{\thesubsection}{\thesection\arabic{subsection}}
\renewcommand{\thesubsubsection}{\thesubsection\arabic{subsubsection}}

\noindent
In the appendix, we provide additional details on the proposed method (\hyperref[app:method]{Appendix~A}). We also present a mathematical proof of our framework (\hyperref[app:proof]{Appendix~B}) and further information on our evaluation methodology (\hyperref[app:eval]{Appendix~C}). Finally, we report additional results that could not be fit in the main paper (\hyperref[app:results]{Appendix~D}).

\section{Method Details}
\label{app:method}

\subsection{Continuous kinematic bicycle dynamics.}
\label{app:kbm_method} 

We consider the continuous kinematic bicycle dynamics for the ego vehicle~\cite{Rajamani12VehicleControl,Polack17KinematicBicycle}
\begin{align}
    \dot{x}(\tilde{t})     &= v(\tilde{t})\cos\theta(\tilde{t}), \label{eq:kbm-ct-x}\\
    \dot{y}(\tilde{t})     &= v(\tilde{t})\sin\theta(\tilde{t}), \label{eq:kbm-ct-y}\\
    \dot{\theta}(\tilde{t})&= \frac{v(\tilde{t})}{L}\tan\bigl(\delta(\tilde{t})\bigr), \label{eq:kbm-ct-theta}\\
    \dot{v}(\tilde{t})     &= a^{\text{lon}}(\tilde{t}), \label{eq:kbm-ct-v}
\end{align}
where $(x(\tilde{t}), y(\tilde{t})) \in \mathbb{R}^2$ is the position in the ego frame, $\theta(\tilde{t}) \in \mathbb{R}$ is the yaw angle, and $v(\tilde{t}) \in \mathbb{R}$ is the speed.

In practice, we implement the kinematic bicycle model in discrete time with sampling period $\Delta t > 0$. Let $\tilde{t}_k := k \Delta t$ denote the $k$-th sampling instant and define the sampled kinematic state
\begin{align}
    z_{t,k}
    &:= \bigl(x(\tilde{t}_k), y(\tilde{t}_k), \theta(\tilde{t}_k), v(\tilde{t}_k)\bigr) \nonumber\\
    &= (x_{t,k}, y_{t,k}, \theta_{t,k}, v_{t,k}) \in \mathbb{R}^4,
    \label{eq:kbm_state}
\end{align}
which matches the state variable $z_{t,k}$ used in the rollout recursion~\eqref{eq:next_rollout}.
We anchor the ego frame at decision time $t$ as in Section~\ref{subsec:lifting_operator}, and set the initial state
\begin{equation}
    z_{t,0} = (x_{t,0},y_{t,0},\theta_{t,0},v_{t,0}) = (0,0,0,v_0),
    \label{eq:kbm_init}
\end{equation}
where the lifting input $s_t$ reduces to the initial speed $v_0 \in \mathbb{R}_{\ge 0}$.

\paragraph{Control activation.}
For KBM, the raw action channels are $a_{t,k}=(\tau_{t,k},\sigma_{t,k},\beta_{t,k})$, corresponding to throttle, steering, and brake (Table~\ref{tab:design-space}).
We apply bounded activations to obtain normalised controls
\begin{equation}
\hat{\tau}_{t,k}=\sigma\!\left(\tau_{t,k}\right)\in(0,1),\qquad
\hat{\sigma}_{t,k}=\tanh\!\left(\sigma_{t,k}\right)\in(-1,1),\qquad
\hat{\beta}_{t,k}=\sigma\!\left(\beta_{t,k}\right)\in(0,1),
\label{eq:kbm_decode_norm}
\end{equation}
and map these to physical controls $u_{t,k}=(a^{\text{lon}}_{t,k},\delta_{t,k})\in\mathcal{U}$ by
\begin{equation}
a^{\text{lon}}_{t,k} = a_{\max}\bigl(\hat{\tau}_{t,k}-\hat{\beta}_{t,k}\bigr),
\qquad
\delta_{t,k}=\delta_{\max}\hat{\sigma}_{t,k},
\label{eq:kbm_decode_phys}
\end{equation}
where $a_{\max}>0$ is a longitudinal gain and $\delta_{\max}\in(0,\tfrac{\pi}{2})$ is the maximum steering angle.

\paragraph{Discrete-time rollout: Euler.}
Our Euler implementation uses a semi-implicit update in which the speed and heading are updated first, and the new values are then used to advance position:
\begin{align}
v_{t,k+1} &= v_{t,k} + a^{\text{lon}}_{t,k}\,\Delta t, \label{eq:kbm-euler-v}\\
\theta_{t,k+1} &= \theta_{t,k} + \frac{v_{t,k+1}}{L}\tan(\delta_{t,k})\,\Delta t, \label{eq:kbm-euler-theta}\\
x_{t,k+1} &= x_{t,k} + v_{t,k+1}\cos(\theta_{t,k+1})\,\Delta t, \label{eq:kbm-euler-x}\\
y_{t,k+1} &= y_{t,k} + v_{t,k+1}\sin(\theta_{t,k+1})\,\Delta t, \label{eq:kbm-euler-y}
\end{align}
for $k=0,\dots,C_f-1$, with $z_{t,0}$ as in \eqref{eq:kbm_init}. This corresponds to holding the controls
$(a^{\text{lon}}_{t,k},\delta_{t,k})$ piecewise constant on each interval $[\tilde{t}_k,\tilde{t}_{k+1}]$.

\paragraph{Discrete-time rollout: RK4.}
For higher accuracy we also implement classical fourth-order Runge--Kutta (RK4) with piecewise-constant controls over each interval.
Define the continuous-time right-hand side associated with \eqref{eq:kbm-ct-x}--\eqref{eq:kbm-ct-v} as
\begin{equation}
\dot{z} = g(z; a^{\text{lon}},\delta)
=
\begin{pmatrix}
v\cos\theta\\
v\sin\theta\\
\frac{v}{L}\tan(\delta)\\
a^{\text{lon}}
\end{pmatrix},
\qquad
z=(x,y,\theta,v)^\top.
\label{eq:kbm_rhs}
\end{equation}
With $u_{t,k}=(a^{\text{lon}}_{t,k},\delta_{t,k})$ fixed on $[\tilde{t}_k,\tilde{t}_{k+1}]$, RK4 updates
\begin{align}
k_1 &= g\!\left(z_{t,k};\,a^{\text{lon}}_{t,k},\delta_{t,k}\right), \nonumber\\
k_2 &= g\!\left(z_{t,k}+\tfrac{\Delta t}{2}k_1;\,a^{\text{lon}}_{t,k},\delta_{t,k}\right), \nonumber\\
k_3 &= g\!\left(z_{t,k}+\tfrac{\Delta t}{2}k_2;\,a^{\text{lon}}_{t,k},\delta_{t,k}\right), \nonumber\\
k_4 &= g\!\left(z_{t,k}+\Delta t\,k_3;\,a^{\text{lon}}_{t,k},\delta_{t,k}\right), \nonumber\\
z_{t,k+1} &= z_{t,k} + \frac{\Delta t}{6}\left(k_1+2k_2+2k_3+k_4\right),
\label{eq:kbm-rk4}
\end{align}
for $k=0,\dots,C_f-1$, starting from $z_{t,0}$ in \eqref{eq:kbm_init}.

\paragraph{Pose projection.}
To match the action-to-waypoint lifting interface in Section~\ref{subsec:lifting_operator}, we extract planar waypoints via
\begin{equation}
w_{t,k}=h(z_{t,k})=(x_{t,k},y_{t,k})\in\mathbb{R}^2,
\qquad k=1,\dots,C_f,
\label{eq:kbm_waypoint_readout}
\end{equation}
so that $\mathcal{F}_\phi(\mathbf{a}_t,s_t)=\bigl(h(z_{t,1}),\dots,h(z_{t,C_f})\bigr)$ as in \eqref{eq:lifting_explicit}.

\subsection{CCPP arc-length increments \& time discretisation.}
\label{app:ccpp_method} 

We anchor the ego frame at decision time $t$ as in Section~\ref{subsec:lifting_operator} and initialise
\begin{equation}
    x_{t,0}=0,\ y_{t,0}=0,\ \theta_{t,0}=0,\ \kappa_{t,0}=\kappa_0,\ v_{t,0}=v_0,
    \label{eq:ccpp_init}
\end{equation}
where $v_0\in\mathbb{R}_{\ge 0}$ is the initial speed and $\kappa_0\in\mathbb{R}$ the initial curvature (typically $\kappa_0=0$).
The waypoint readout is the pose component
\begin{equation}
    w_{t,k} := (x_{t,k},y_{t,k},\theta_{t,k})\in\mathbb{R}^3,
    \qquad k=1,\dots,C_f,
    \label{eq:ccpp_waypoint_readout}
\end{equation}
and the predicted waypoint sequence is $\mathbf{w}_t=\{w_{t,k}\}_{k=1}^{C_f}\in(\mathbb{R}^3)^{C_f}$.

\paragraph{Arc-length dynamics.}
CCPP parameterises lateral motion by the travelled arc-length $s$ and evolves the pose and curvature according to
\begin{align}
\frac{dx(s)}{ds} &= \cos(\theta(s)), \label{eq:dxds_ccpp}\\
\frac{dy(s)}{ds} &= \sin(\theta(s)), \label{eq:dyds_ccpp}\\
\frac{d\theta(s)}{ds} &= \kappa(s), \label{eq:dthetads_ccpp}\\
\frac{d\kappa(s)}{ds} &= \varsigma(s), \label{eq:dkds_ccpp}
\end{align}
where $\kappa(s)$ is curvature and $\varsigma(s)$ is sharpness (curvature rate).

\paragraph{Control activation.}
For CCPP, the raw action channels are $a_{t,k}=(\tau_{t,k},\xi_{t,k},\beta_{t,k})$, corresponding to throttle, sharpness, and brake (Table~\ref{tab:design-space}).
We apply bounded activations to obtain normalised controls
\begin{equation}
\hat{\tau}_{t,k}=\sigma\!\left(\tau_{t,k}\right)\in(0,1),\qquad
\hat{\xi}_{t,k}=\tanh\!\left(\xi_{t,k}\right)\in(-1,1),\qquad
\hat{\beta}_{t,k}=\sigma\!\left(\beta_{t,k}\right)\in(0,1),
\label{eq:ccpp_decode_norm}
\end{equation}
and map these to physical controls $u_{t,k}=(a^{\text{lon}}_{t,k},\varsigma_{t,k})\in\mathcal{U}$ by
\begin{equation}
a^{\text{lon}}_{t,k} = a_{\mathrm{gain}}\bigl(\hat{\tau}_{t,k}-\hat{\beta}_{t,k}\bigr),
\qquad
\varsigma_{t,k}=\varsigma_M \hat{\xi}_{t,k},
\label{eq:ccpp_decode_phys}
\end{equation}
where $a_{\mathrm{gain}}>0$ is a longitudinal gain and $\varsigma_M>0$ bounds the sharpness.

\paragraph{Time discretisation and arc-length increments.}
Although the lateral dynamics are expressed in arc-length $s$, the policy is executed in discrete time with step $\Delta t>0$.
Let $k=0,\dots,C_f-1$ index the horizon steps.
We maintain a scalar speed variable $v_{t,k}$ (m/s) and update it using the longitudinal acceleration control:
\begin{equation}
    v_{t,k+1} = \max\{v_{t,k} + a_{t,k}^{\text{lon}}\,\Delta t, 0\},
\label{eq:ccpp_speed_update}
\end{equation}
i.e.\ we clamp to nonnegative speeds (forward motion only).
We then approximate the arc length travelled during the $k$-th time step by
\begin{equation}
    \Delta s_{t,k} \approx v_{t,k+1}\,\Delta t.
    \label{eq:ccpp_ds_total}
\end{equation}
To integrate \eqref{eq:dxds_ccpp}--\eqref{eq:dkds_ccpp} over the increment $\Delta s_{t,k}$, we use $n_{\text{int}}\in\mathbb{N}$ uniform substeps
\begin{equation}
\delta s_{t,k} := \frac{\Delta s_{t,k}}{n_{\text{int}}}.
\label{eq:ccpp_ds_step}
\end{equation}
Within each time step $k$, the controls $(a^{\text{lon}}_{t,k},\varsigma_{t,k})$ are held piecewise constant, and the arc-length dynamics are integrated over
$s\in[0,\Delta s_{t,k}]$ using either Euler or RK4 in $s$.

\paragraph{Arc-length rollout: Euler substepping.}
Let $(x^{(0)}_{t,k},y^{(0)}_{t,k},\theta^{(0)}_{t,k},\kappa^{(0)}_{t,k}) := (x_{t,k},y_{t,k},\theta_{t,k},\kappa_{t,k})$ denote the substep initialisation.
For substeps $j=0,\dots,n_{\text{int}}-1$, forward Euler in arc-length yields
\begin{align}
\kappa^{(j+1)}_{t,k} &= \mathrm{clip}\!\left(\kappa^{(j)}_{t,k} + \varsigma_{t,k}\,\delta s_{t,k};\, -\kappa_M,\kappa_M\right), \label{eq:ccpp_euler_kappa}\\
\theta^{(j+1)}_{t,k} &= \theta^{(j)}_{t,k} + \kappa^{(j+1)}_{t,k}\,\delta s_{t,k}, \label{eq:ccpp_euler_theta}\\
x^{(j+1)}_{t,k} &= x^{(j)}_{t,k} + \cos\!\left(\theta^{(j+1)}_{t,k}\right)\,\delta s_{t,k}, \label{eq:ccpp_euler_x}\\
y^{(j+1)}_{t,k} &= y^{(j)}_{t,k} + \sin\!\left(\theta^{(j+1)}_{t,k}\right)\,\delta s_{t,k}, \label{eq:ccpp_euler_y}
\end{align}
where $\mathrm{clip}(x;-\kappa_M,\kappa_M)=\min\{\max\{x,-\kappa_M\},\kappa_M\}$ enforces the curvature bound $|\kappa|\le\kappa_M$.
The discrete-time state is then set to the terminal substep values
\begin{equation}
(x_{t,k+1},y_{t,k+1},\theta_{t,k+1},\kappa_{t,k+1})
:=
(x^{(n_{\text{int}})}_{t,k},y^{(n_{\text{int}})}_{t,k},\theta^{(n_{\text{int}})}_{t,k},\kappa^{(n_{\text{int}})}_{t,k}).
\label{eq:ccpp_euler_terminal}
\end{equation}

\paragraph{Arc-length rollout: RK4 substepping.}
For RK4, we apply classical Runge--Kutta to the arc-length ODE system \eqref{eq:dxds_ccpp}--\eqref{eq:dkds_ccpp} over each substep $\delta s_{t,k}$, with
$\varsigma_{t,k}$ held constant across the substep. Define the arc-length vector field
\begin{equation}
\frac{d}{ds}
\begin{pmatrix}
x\\y\\\theta\\\kappa
\end{pmatrix}
=
F\!\left(x,y,\theta,\kappa;\varsigma\right)
=
\begin{pmatrix}
\cos\theta\\
\sin\theta\\
\kappa\\
\varsigma
\end{pmatrix}.
\label{eq:ccpp_rhs}
\end{equation}
Starting from $(x^{(0)}_{t,k},y^{(0)}_{t,k},\theta^{(0)}_{t,k},\kappa^{(0)}_{t,k})=(x_{t,k},y_{t,k},\theta_{t,k},\kappa_{t,k})$, for each substep $j$
we compute
\begin{align}
K_1 &= F\!\left(x^{(j)}_{t,k},y^{(j)}_{t,k},\theta^{(j)}_{t,k},\kappa^{(j)}_{t,k};\varsigma_{t,k}\right), \nonumber\\
K_2 &= F\!\left(\mathbf{z}^{(j)}_{t,k}+\tfrac{\delta s_{t,k}}{2}K_1;\varsigma_{t,k}\right), \nonumber\\
K_3 &= F\!\left(\mathbf{z}^{(j)}_{t,k}+\tfrac{\delta s_{t,k}}{2}K_2;\varsigma_{t,k}\right), \nonumber\\
K_4 &= F\!\left(\mathbf{z}^{(j)}_{t,k}+\delta s_{t,k}K_3;\varsigma_{t,k}\right), \nonumber\\
\mathbf{z}^{(j+1)}_{t,k}
&=
\mathbf{z}^{(j)}_{t,k}+\frac{\delta s_{t,k}}{6}\left(K_1+2K_2+2K_3+K_4\right),
\label{eq:ccpp_rk4}
\end{align}
where $\mathbf{z}^{(j)}_{t,k}:=(x^{(j)}_{t,k},y^{(j)}_{t,k},\theta^{(j)}_{t,k},\kappa^{(j)}_{t,k})^\top$.
After each RK4 substep we enforce the curvature bound by clamping
\begin{equation}
\kappa^{(j+1)}_{t,k}\leftarrow \mathrm{clip}\!\left(\kappa^{(j+1)}_{t,k};-\kappa_M,\kappa_M\right).
\label{eq:ccpp_rk4_clamp}
\end{equation}
As in \eqref{eq:ccpp_euler_terminal}, the discrete-time state $(x_{t,k+1},y_{t,k+1},\theta_{t,k+1},\kappa_{t,k+1})$ is obtained from the terminal substep state.

\paragraph{Pose projection.}
To match the action-to-waypoint lifting interface in Section~\ref{subsec:lifting_operator}, we may extract planar waypoints via the pose projection
\begin{equation}
\hat{w}_{t,k}=h(z_{t,k})=(x_{t,k},y_{t,k})\in\mathbb{R}^2,
\qquad k=1,\dots,C_f,
\label{eq:ccpp_pose_projection}
\end{equation}
while \eqref{eq:ccpp_waypoint_readout} additionally records heading for analysis. In either case, the lifting output is formed as in \eqref{eq:lifting_explicit}.

\section{Mathematical Proof of Proposition~\ref{prop:lifting-smooth}}
\label{app:proof}

We provide a detailed proof of Proposition~\ref{prop:lifting-smooth}. Throughout, we fix the decision time $t\in\mathbb{N}$ and the control horizon length $C_f\in\mathbb{N}$, and we use the notation of Section~\ref{sec:newmethod}. In particular, the action sequence is
\[
\mathbf{a}_t=(a_{t,0},a_{t,1},\dots,a_{t,C_f-1})\in\mathbb{R}^{C_f\times d_u},
\]
and the lifting operator $\mathcal{F}_\phi$ is defined via the recursion \eqref{eq:next_rollout} and the readout \eqref{eq:lifting_explicit}.

\begin{proof}[Proof of Proposition~\ref{prop:lifting-smooth}]
Fix an initial state $s_t\in\mathcal{S}$ and parameters $\phi$. Define the rollout recursion by
\begin{equation}
z_{t,0} = z_{t,0}(s_t)\in\mathcal{Z},
\qquad
z_{t,k+1} = f_\phi\!\left(z_{t,k},\,\psi_\phi(a_{t,k}),\,\Delta t\right),
\quad k=0,\dots,C_f-1,
\label{eq:app_rollout_recursion}
\end{equation}
and define the predicted waypoint sequence by
\begin{equation}
\mathcal{F}_\phi(\mathbf{a}_t,s_t)
=
\left(h(z_{t,1}),\,h(z_{t,2}),\,\dots,\,h(z_{t,C_f})\right)
\in\mathbb{R}^{C_f\times 2}.
\label{eq:app_lifting_def}
\end{equation}
We prove determinism and $C^1$ regularity. The proof is organised by instantiation: first KBM, then CCPP.

\paragraph{(A) KBM lifting $\kbmlift$.}
In this instantiation, the state has the form $z_{t,k}=(x_{t,k},y_{t,k},\theta_{t,k},v_{t,k})$, the control activation $\psi_\phi$ maps raw actions to bounded physical controls, the dynamics rollout $f_\phi$ corresponds to a numerical discretisation (Euler/RK4) of the continuous bicycle dynamics \eqref{eq:kbm-ct-x}--\eqref{eq:kbm-ct-v}, and the pose projection is $h(x,y,\theta,v)=(x,y)$ (Table~\ref{tab:design-space}).

\smallskip
\noindent\emph{(A.i) Determinism.}
Assumption (a) implies that for each $k$, the activated control
\[
u_{t,k}:=\psi_\phi(a_{t,k})\in\mathcal{U}
\]
is uniquely determined by $a_{t,k}$. Fixing $s_t$ fixes the initial state $z_{t,0}=z_{t,0}(s_t)$ (with ego-frame anchoring ensuring $h(z_{t,0})=(0,0)$).
Assumption (b) implies that the one-step update map
\[
(z,u)\mapsto f_\phi(z,u,\Delta t)
\]
is single-valued, hence the recursion \eqref{eq:app_rollout_recursion} uniquely determines $z_{t,1}$ from $(z_{t,0},u_{t,0})$, then uniquely determines
$z_{t,2}$ from $(z_{t,1},u_{t,1})$, and so on. Formally, by induction on $k$:
\begin{itemize}
\item \emph{Base case:} $z_{t,0}$ is fixed by $s_t$.
\item \emph{Inductive step:} if $z_{t,k}$ is uniquely determined, then $z_{t,k+1}=f_\phi(z_{t,k},\psi_\phi(a_{t,k}),\Delta t)$ is uniquely determined.
\end{itemize}
Thus the state sequence $\{z_{t,k}\}_{k=0}^{C_f}$ is unique, and therefore each waypoint $h(z_{t,k})$ is unique. Hence $\mathcal{F}_\phi(\cdot,s_t)$ is deterministic.

\smallskip
\noindent\emph{(A.ii) $C^1$ regularity.}
Define the \emph{one-step lifted update} map
\[
T_\phi:\mathcal{Z}\times\mathbb{R}^{d_u}\to\mathcal{Z},
\qquad
T_\phi(z,a):= f_\phi\!\left(z,\,\psi_\phi(a),\,\Delta t\right).
\]
By assumptions (a) and (b), $\psi_\phi$ is $C^1$ and $f_\phi$ is $C^1$; therefore $T_\phi$ is $C^1$ as a composition of $C^1$ maps.

We now show by induction on $k$ that $z_{t,k}$ is a $C^1$ function of the partial action sequence $(a_{t,0},\dots,a_{t,k-1})$.
Let $\mathbf{a}_{t,0:k-1}$ denote this partial sequence.

\emph{Base case ($k=0$).} $z_{t,0}$ is constant with respect to $\mathbf{a}_{t,0:-1}$, hence $C^1$.

\emph{Inductive step.} Assume $z_{t,k}=Z_k(\mathbf{a}_{t,0:k-1})$ for some $C^1$ map $Z_k$.
Define
\[
Z_{k+1}(\mathbf{a}_{t,0:k})
:=
T_\phi\!\bigl(Z_k(\mathbf{a}_{t,0:k-1}),\,a_{t,k}\bigr).
\]
Since $Z_k$ and $T_\phi$ are $C^1$, $Z_{k+1}$ is $C^1$ by closure of $C^1$ under composition. Hence $z_{t,k}$ is $C^1$ in $\mathbf{a}_{t,0:k-1}$ for all $k$.

Finally, by assumption (c), $h$ is $C^1$, so each component map
\[
\mathbf{a}_t \mapsto h(z_{t,k}(\mathbf{a}_t))
\]
is $C^1$. The lifting operator \eqref{eq:app_lifting_def} is a finite concatenation of these $C^1$ components, hence
\[
\mathcal{F}_\phi(\cdot,s_t)\in C^1\!\left(\mathbb{R}^{C_f\times d_u},\,\mathbb{R}^{C_f\times 2}\right).
\]

\paragraph{(B) CCPP lifting $\ccpplift$.}
In this instantiation, the state has the form $z_{t,k}=(x_{t,k},y_{t,k},\theta_{t,k},\kappa_{t,k})$, the control activation $\psi_\phi$ maps raw actions to bounded controls (including sharpness $\varsigma$), the rollout $f_\phi$ corresponds to a numerical discretisation (Euler/RK4, with $n_{\text{int}}$ substeps per $\Delta t$) of the continuous arc-length dynamics \eqref{eq:dxds_ccpp}-\eqref{eq:dkds_ccpp}, and the pose projection is $h(x,y,\theta,\kappa)=(x,y)$ (Table~\ref{tab:design-space}). Saturations enforce $v\ge 0$ and $|\kappa|\le \kappa_M$ (Table~\ref{tab:design-space}).

\smallskip
\noindent\emph{(B.i) Determinism.}
The determinism argument is identical in structure to the KBM case and uses only that the CCPP recursion is single-valued.
Assumption (a) ensures that for each $k$ the activated control $u_{t,k}=\psi_\phi(a_{t,k})$ is uniquely determined.
Fixing $s_t$ fixes the initial state $z_{t,0}=z_{t,0}(s_t)$ (with $h(z_{t,0})=(0,0)$ by ego-frame anchoring).
Assumption (b) ensures the update $z_{t,k+1}=f_\phi(z_{t,k},u_{t,k},\Delta t)$ is uniquely determined from $(z_{t,k},u_{t,k})$.
Thus, by the same induction on $k$ as above, the state sequence $\{z_{t,k}\}_{k=0}^{C_f}$ is unique, and hence
$\mathcal{F}_\phi(\cdot,s_t)$ is deterministic.

\smallskip
\noindent\emph{(B.ii) $C^1$ regularity under the standing assumptions.}
Define again
\[
T_\phi(z,a):= f_\phi\!\left(z,\,\psi_\phi(a),\,\Delta t\right).
\]
By assumptions (a) and (b), $T_\phi$ is $C^1$. The same induction as in (A.ii) yields that each $z_{t,k}$ is a $C^1$ function of the partial
action sequence $\mathbf{a}_{t,0:k-1}$, and then assumption (c) implies each waypoint $h(z_{t,k})$ is $C^1$ in $\mathbf{a}_t$.
Finite concatenation gives
\[
\mathcal{F}_\phi(\cdot,s_t)\in C^1\!\left(\mathbb{R}^{C_f\times d_u},\,\mathbb{R}^{C_f\times 2}\right).
\]
This establishes the claimed differentiability for the CCPP lifting operator under the stated hypotheses of Proposition~\ref{prop:lifting-smooth}.

\smallskip
\noindent\emph{Remark (connection to saturations in Table~\ref{tab:design-space}).}
Table~\ref{tab:design-space} notes that CCPP may employ saturations enforcing $v\ge 0$ and $|\kappa|\le \kappa_M$.
Such saturations are typically implemented via pointwise clamping operators, which are not $C^1$ at the switching boundaries.
On any open set of action sequences for which the saturations remain inactive throughout the finite rollout, the effective rollout map coincides locally with a smooth map,
and the above $C^1$ argument applies without modification.
\end{proof}

\section{Training \& Evaluation Details}
\label{app:eval}

\subsection{Additional benchmark information}
\label{app:extrabeninfo}

In \textbf{NAVSIM} \textit{navtest}, the evaluation is \emph{non-reactive}: surrounding traffic does not interact with the ego policy, which is assessed in an open-loop manner by predicting a fixed-horizon trajectory of $K$ future waypoints in the ego frame. In contrast, \textbf{NAVSIM} \textit{navhard} introduces \emph{pseudo-simulation} to better stress robustness under compounding errors while retaining data-driven scalability: it (i) generates \emph{synthetic} observations for perturbed ego futures using a pre-rendered scene representation via 3D Gaussian Splatting, and (ii) makes the environment \emph{reactive}, thereby exposing recovery behaviour under deviations in a setting closer to closed-loop evaluation. Crucially for our purposes, NAVSIM benchmarks are \emph{waypoint-only}: agents must output a fixed-horizon trajectory of ego-frame waypoints. Direct low-level control command execution is not supported \cite{Dauner24NAVSIM,Cao25PseudoSimulation}. This design makes NAVSIM a natural testbed for our framework: by coupling our framework to a control-based policy, action predictions are mapped into waypoint sequences, enabling training and evaluation of control policies without any changes to the benchmark.


We further validate our approach in CARLA~\cite{Dosovitskiy17}, selecting two additional benchmarks: (i) \textbf{Bench2Drive} provides an official expert \emph{base} dataset with $2$M fully annotated frames collected from $13{,}638$ short clips, uniformly covering $44$ interactive scenarios (e.g., cut-ins, overtaking, detours), $23$ weathers and $12$ towns, sampled at a frequency of $10$\,Hz \cite{Jia24Bench2Drive}. Bench2Drive is a closed-loop benchmark built on CARLA that natively supports both action-based and waypoint-based policies~\cite{Jia24Bench2Drive}; (ii) we use a Roach dataset, comprising approximately $10$ hours of \textbf{general driving in Town05} ($356$,$508$ frames at $10 \text{Hz}$) under four weather conditions (ClearNoon, ClearSunset, HardRainNoon, WetNoon)~\cite{Zhang21RLICoach,Porres24GuidingAttention}.

\subsection{Additional implementation details}
\label{app:extraimpdet}

Both NAVSIM and Bench2Drive experiments are implemented in PyTorch Lightning for training. In both benchmarks, we train all models for $100$ epochs with the Adam optimizer (learning rate $1\times 10^{-4}$, weight decay $10^{-7}$) and a step learning-rate schedule that halves the learning rate every $30$ epochs. We use a batch size of $64$ and validate every $10$ epochs. Training runs on $4$ NVIDIA L40S GPUs with distributed data-parallel (DDP) strategy. For Bench2Drive we use the official \emph{base} dataset \cite{Jia24Bench2Drive} for training. For NAVSIM experiments, we cache pre-processed training samples on disk to amortise the cost of scene filtering and trajectory extraction across runs. The training setup for Roach-based experiments is specified in Porres \emph{et al.} \yrcite{Porres24GuidingAttention}.

\paragraph{MLP controllers for waypoint prediction (CIL++/MILE w/$\mlplift$).}
As explained in the main paper, to isolate the effect of output-space alignment from the vehicle-model structure introduced by KBM/CCPP, we implement an additional \emph{waypoint regression} variant for each action-based backbone by replacing the original action head with a lightweight MLP that directly predicts a $K{=}8$ step ego-frame waypoint sequence with $(x,y,\psi)$ per step (i.e., output shape $[B,8,3]$). Importantly, these MLP controllers share the full perception and temporal backbone with the corresponding action policy; the only change is the final prediction head and its training target.

\paragraph{CIL++ w/$\mlplift$.}
Starting from the standard CIL++ pipeline \cite{Xiao2023cilpp}, we swap the original action head for an MLP waypoint head implemented with the same fully-connected block used elsewhere in CIL++ (a feed-forward network that outputs $8\times 3$ scalars and reshapes them to $[B,8,3]$). All upstream components (positional encoding choice, measurement embeddings, Transformer configuration, and pooling) are kept identical to the default CIL++ implementation; the head is initialised with Xavier weights and trained under the same optimisation schedule as the other variants.

\paragraph{MILE w/$\mlplift$.}
For MILE \cite{Hu22MILE}, we similarly attach a simple waypoint regression head \texttt{WaypointMLPHead}: 3-layer MLP with ReLU activations and hidden width 256, to the model's BEV/state embedding. Concretely, after encoding the input history into an embedding sequence, we take the last-step embedding and regress the full future waypoint horizon in a single forward pass, producing $[B,8,3]$. This preserves MILE's perception stack (image encoder, BEV backbone, and measurement/speed conditioning), while removing the need to output low-level controls for this baseline; the rest of the architecture and training protocol is unchanged.

\paragraph{Coupling and supervision.}
Both MLP controllers are trained and evaluated using the same waypoint targets and horizon as our framework-based variants, ensuring that any differences against KBM/CCPP are attributable to (i) unconstrained direct waypoint regression versus (ii) waypoint generation through a differentiable vehicle-model prior.

\section{Additional Results}
\label{app:results}

\subsection{NAVSIM qualitative results}
\label{sec:navsimqual}

To complement the reported quantitative NAVSIM \texttt{navtest} results, we present a qualitative comparison between our framework and two baselines: a constant-velocity agent and the LTF baseline \cite{Cao25PseudoSimulation}. Figure~\ref{fig:navsim_scenes_combined} visualises trajectory predictions for the constant-velocity agent and our four model instantiations across four representative NAVSIM scenes (one scene per row). Figure~\ref{fig:ltf_1row_4scenes} shows the corresponding LTF predictions for the same four examples. In each panel, the ego vehicle is shown in red, \emph{ground-truth (GT) waypoints} are plotted in green, and \emph{predicted waypoints} are plotted in red.

Overall, the constant-velocity baseline frequently fails in challenging scenarios, while both LTF and our framework generate plausible trajectories that follow the intended route. Since LTF and our instantiations can be difficult to distinguish purely from waypoint overlays, we additionally report the \emph{per-waypoint L1 error} comparison between LTF and our framework (mean $\pm$ standard deviation over the four instantiations) for the four scenes in Figure~\ref{fig:scene_mosaic_ltf_vs_ours_big}. This error-based view helps disambiguate subtle differences in the late-horizon waypoints and highlights where one method is systematically more accurate.

Across all four scenes, the constant-velocity agent exhibits the expected limitations of a purely kinematic rollout that does not account for road geometry. In \textbf{Scene 1} (first row of Figure~\ref{fig:navsim_scenes_combined}), the ego approaches a region with traffic: the constant-velocity baseline effectively gets stuck and does not meaningfully progress through the scene, whereas all learned agents (our four instantiations and LTF in Figure~\ref{fig:ltf_1row_4scenes}) continue with reasonable forward motion. In \textbf{Scene 2}, where the ego must execute a left turn at an intersection, the constant-velocity baseline fails in the most direct way by continuing straight rather than following the turning lane. Similar shortcomings appear in \textbf{Scene 3}, which requires a lane change: the constant-velocity trajectory cannot represent the lateral manoeuvre and therefore departs from the GT. Finally, in \textbf{Scene 4}, the baseline provides a plausible early-horizon motion but still lacks the context to anticipate the correct intersection behaviour reliably. These qualitative failures reinforce that constant-velocity rollouts are not a meaningful behavioural prior for NAVSIM \texttt{navtest} beyond serving as a simple lower-bound baseline.

In \textbf{Scenes 1 and 2}, the predicted waypoint overlays of LTF and our instantiations are visually close, and the qualitative assessment alone can be inconclusive. We therefore use Figure~\ref{fig:scene_mosaic_ltf_vs_ours_big} to compare the waypoint-level L1 errors. For \textbf{Scene 1}, our framework achieves a slightly lower mean L1 error than LTF. Notably, the standard deviation across our four instantiations is extremely small---so small that it is barely visible in the plot---suggesting consistent behaviour across random seeds for this scenario. For \textbf{Scene 2}, both LTF and our framework again show largely indistinguishable behaviour at the trajectory level. A closer look at Figure~\ref{fig:navsim_scenes_combined} indicates that LTF and the CIL++ w/KBM instantiation exhibit slightly larger deviations in the last few predicted waypoints, while the remaining instantiations track the GT more tightly; however, these differences are minor. Consistent with that observation, Figure~\ref{fig:scene_mosaic_ltf_vs_ours_big} shows that the overall error profiles are essentially identical at a qualitative level for this scene.

\textbf{Scene 3} provides a clearer separation. Here the ego must perform a lane change, and the learned models are required to predict both the correct lateral shift and the correct progression along the lane. In Figure~\ref{fig:navsim_scenes_combined}, our instantiations visibly track the GT trajectory more closely than LTF, particularly in the late-horizon waypoints where LTF exhibits a small but systematic offset. This qualitative difference is reflected in Figure~\ref{fig:scene_mosaic_ltf_vs_ours_big}: our framework attains a noticeably lower L1 error across the horizon compared to LTF, indicating that the visual improvement is not merely cosmetic but corresponds to a real reduction in waypoint deviation.

\textbf{Scene 4} illustrates a failure mode for our framework. The ego must turn right at an intersection, and our instantiations struggle to predict the right-turn curvature reliably. As shown in Figure~\ref{fig:scene_mosaic_ltf_vs_ours_big}, the early-horizon waypoints (roughly the first three) can incur L1 errors that are higher than those of the constant-velocity baseline, which is a strong indicator of a systematic mismatch rather than random noise. Moreover, the standard deviation across our four instantiations is relatively large for this scene, suggesting sensitivity to initialization or training variance under this particular manoeuvre. In contrast, LTF maintains a consistently lower L1 error than our instantiation mean throughout the horizon, indicating that LTF is more stable and accurate on this right-turn scenario.

\begin{figure*}[p]
    \centering
    \small
    \makebox[0.18\textwidth][c]{\textbf{Constant Velocity}}%
    \hspace{2pt}%
    \makebox[0.18\textwidth][c]{\textbf{CIL++ w/$\kbmlift$}}%
    \hspace{2pt}%
    \makebox[0.18\textwidth][c]{\textbf{CIL++ w/$\ccpplift$}}%
    \hspace{2pt}%
    \makebox[0.18\textwidth][c]{\textbf{MILE w/$\kbmlift$}}%
    \hspace{2pt}%
    \makebox[0.18\textwidth][c]{\textbf{MILE w/$\ccpplift$}}\\
    
    \vspace{2pt}
    
    \adtframe{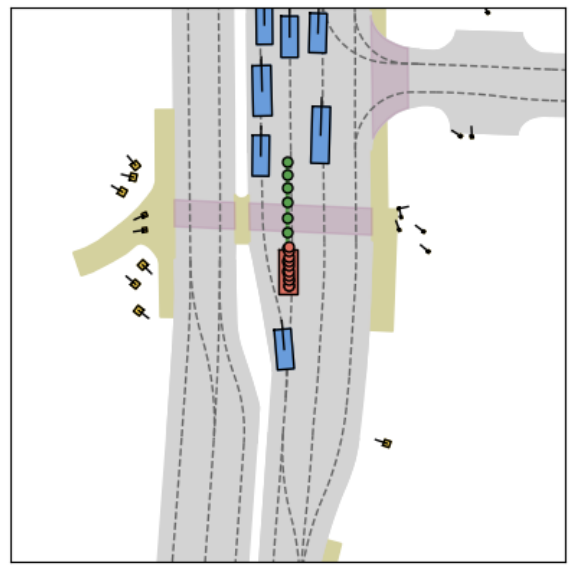}%
    \hspace{2pt}
    \adtframe{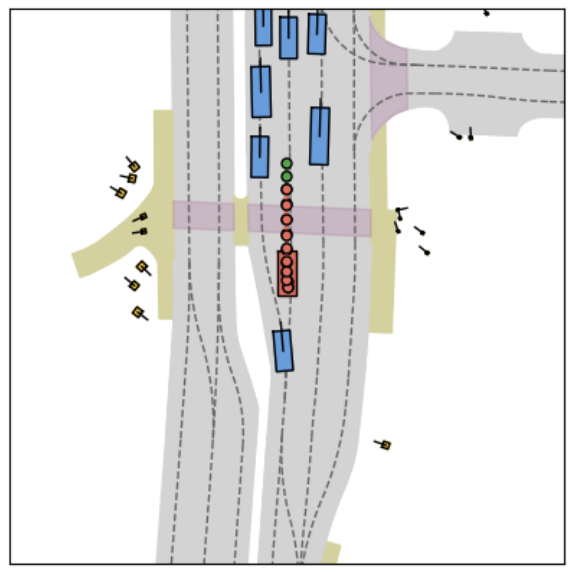}%
    \hspace{2pt}
    \adtframe{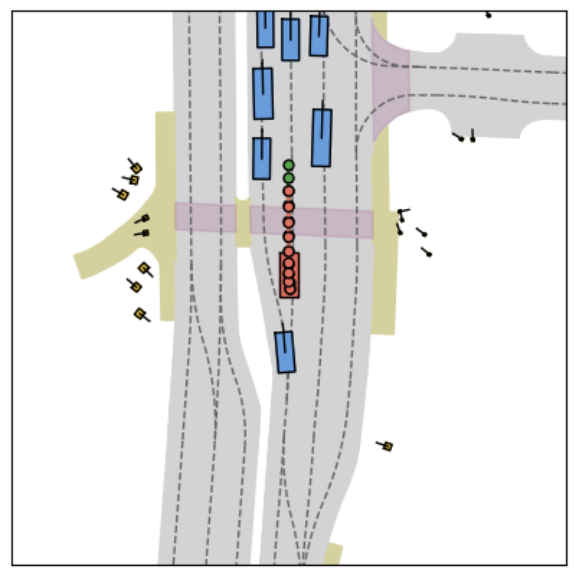}%
    \hspace{2pt}
    \adtframe{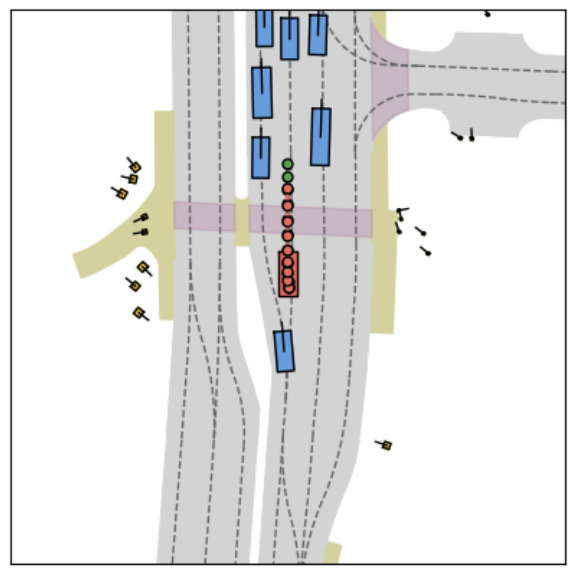}%
    \hspace{2pt}
    \adtframe{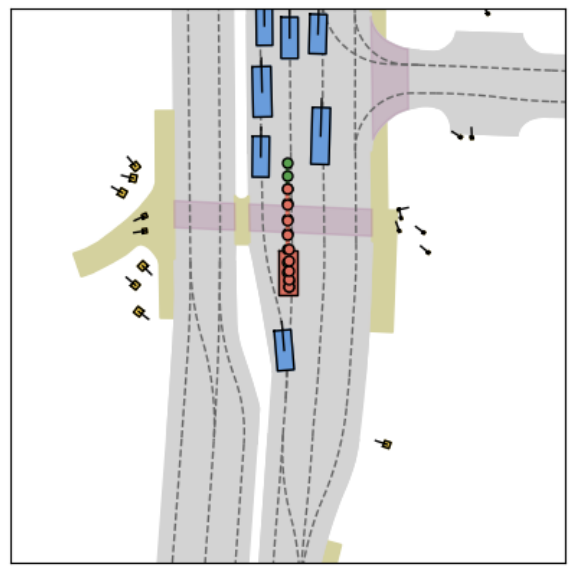}\\
    
    \vspace{4pt}
    
    \adtframe{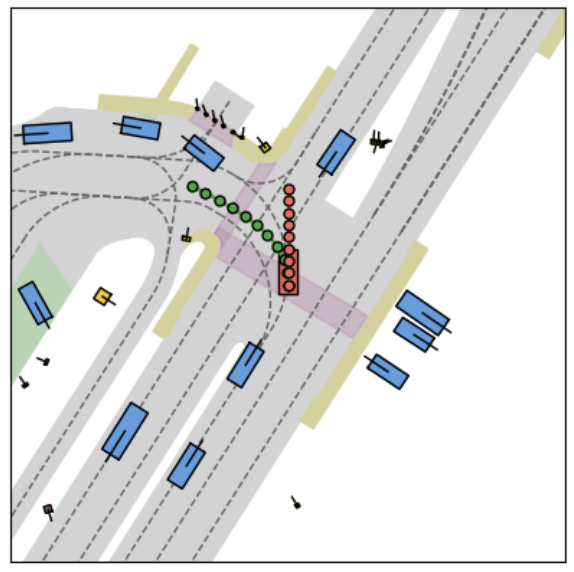}%
    \hspace{2pt}
    \adtframe{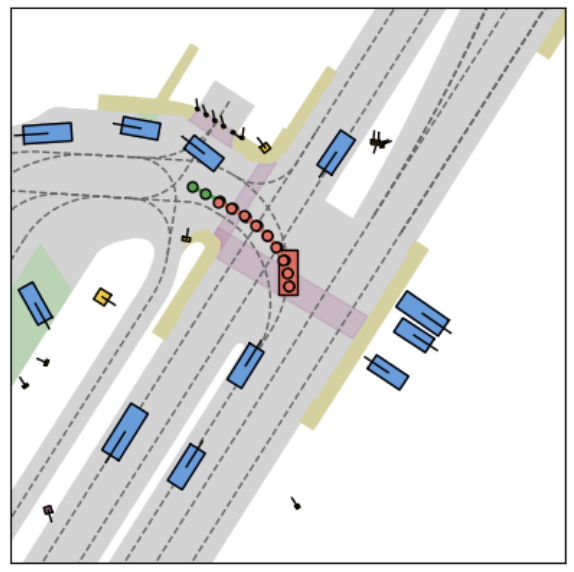}%
    \hspace{2pt}
    \adtframe{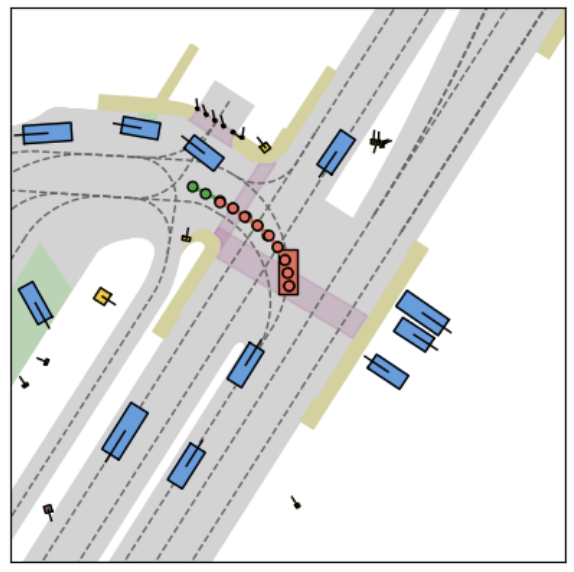}%
    \hspace{2pt}
    \adtframe{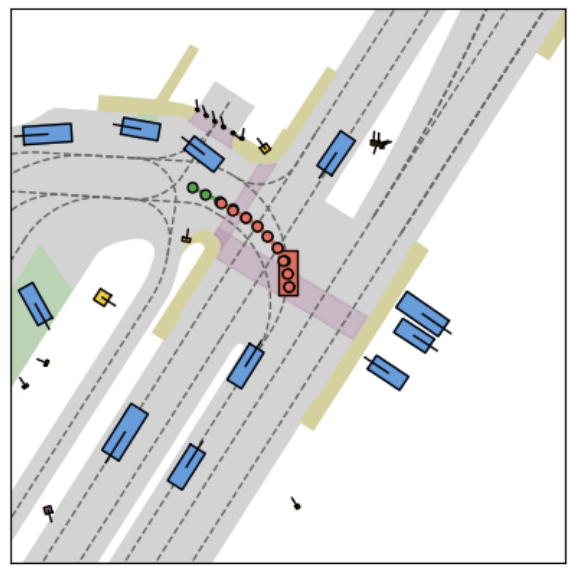}%
    \hspace{2pt}
    \adtframe{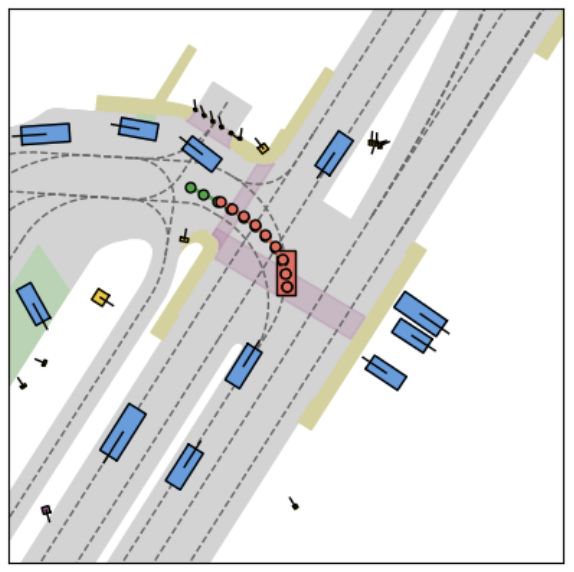}\\
    
    \vspace{4pt}
    
    \adtframe{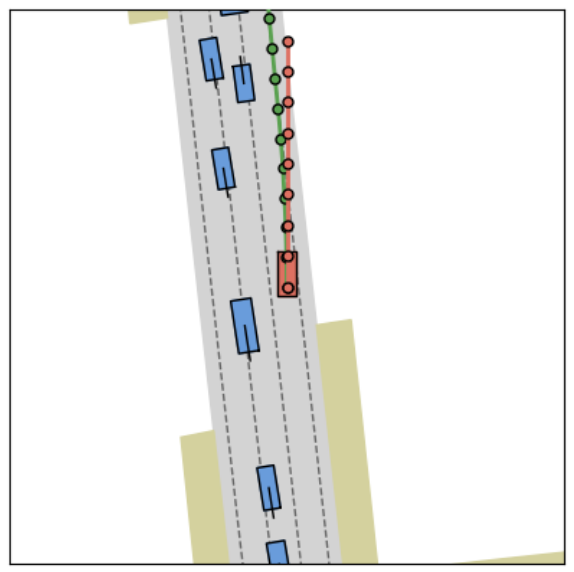}%
    \hspace{2pt}
    \adtframe{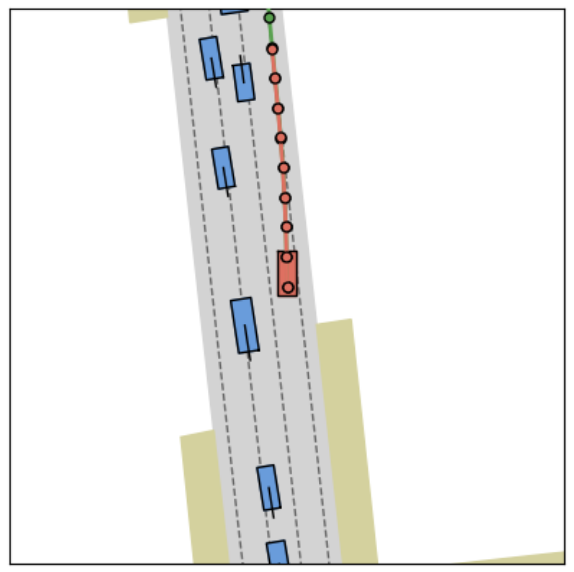}%
    \hspace{2pt}
    \adtframe{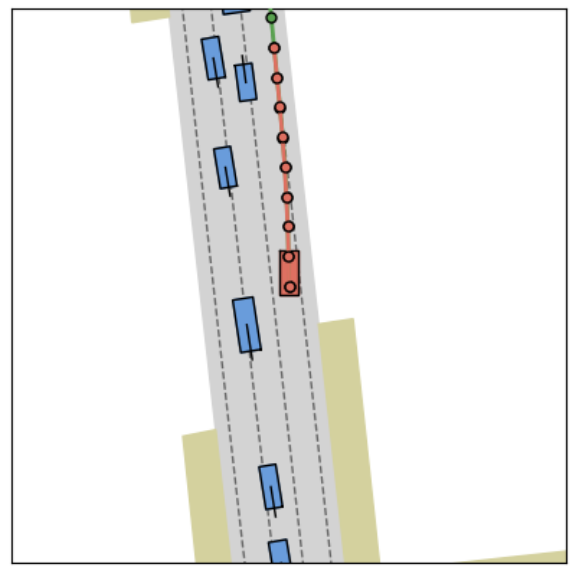}%
    \hspace{2pt}
    \adtframe{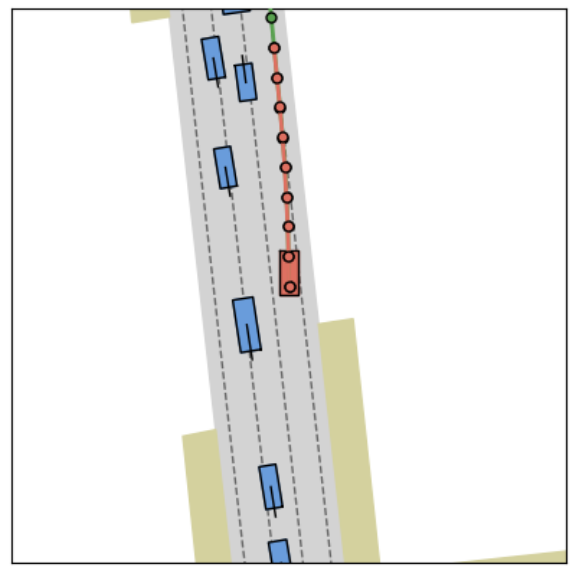}%
    \hspace{2pt}
    \adtframe{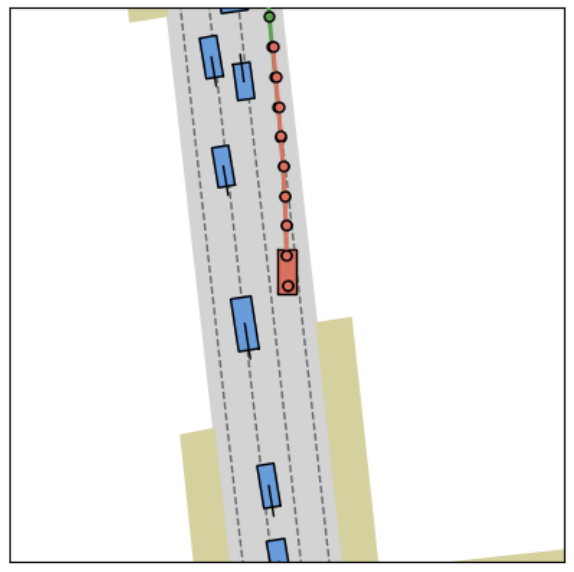}\\
    
    \vspace{4pt}
    
    \adtframe{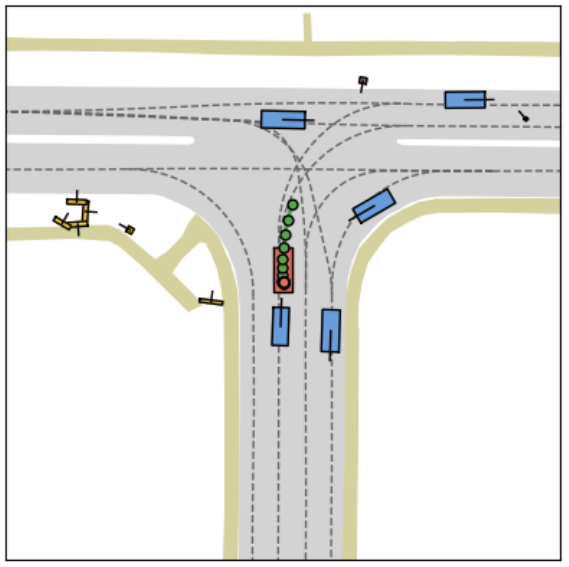}%
    \hspace{2pt}
    \adtframe{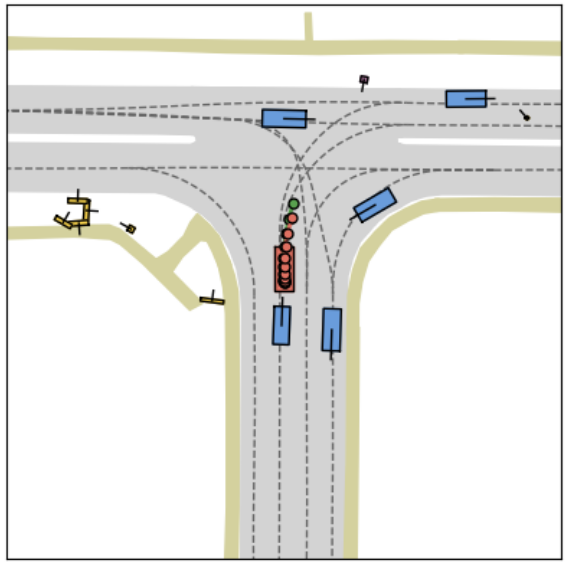}%
    \hspace{2pt}
    \adtframe{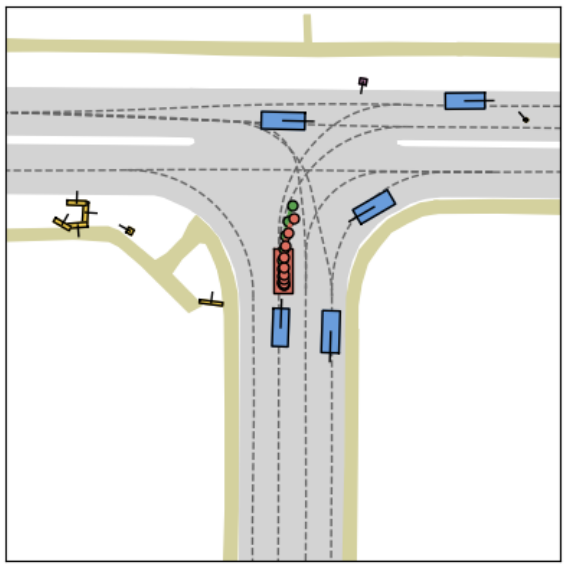}%
    \hspace{2pt}
    \adtframe{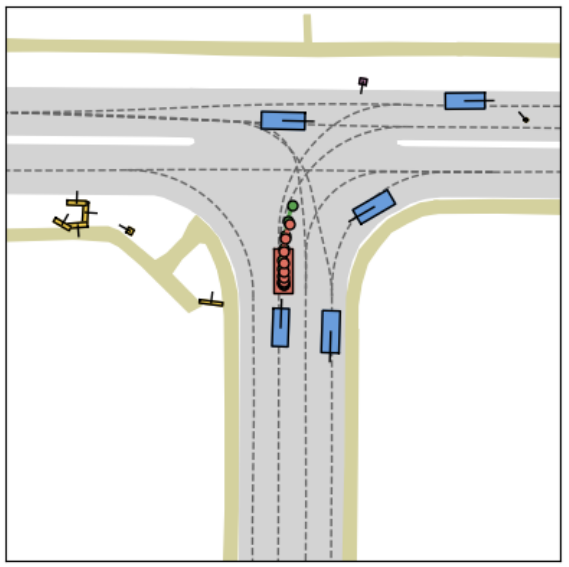}%
    \hspace{2pt}
    \adtframe{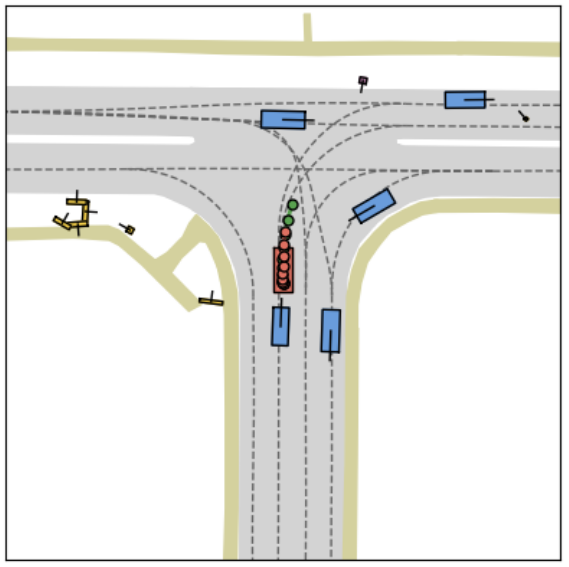}
    
    \caption{\textbf{Qualitative trajectory predictions across diverse NAVSIM driving scenes.} 
    Each row shows a different example scene. Green waypoints represent ground-truth (GT), and red waypoints represent agent predictions.}
    \label{fig:navsim_scenes_combined}
\end{figure*}

\begin{figure*}[t]
    \centering

    \includegraphics[width=0.245\textwidth]{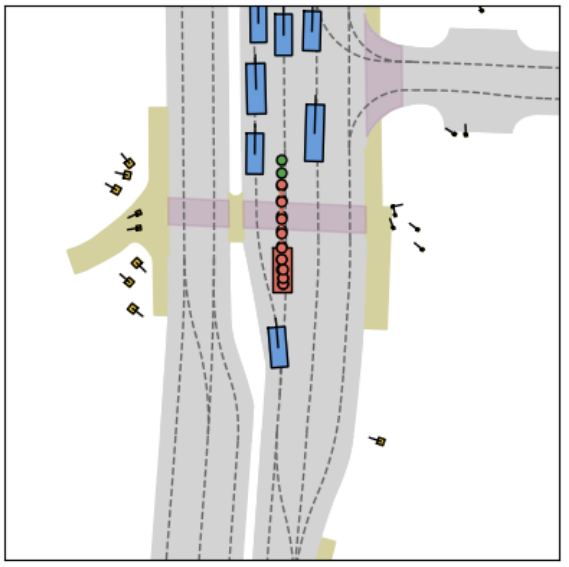}\hfill
    \includegraphics[width=0.245\textwidth]{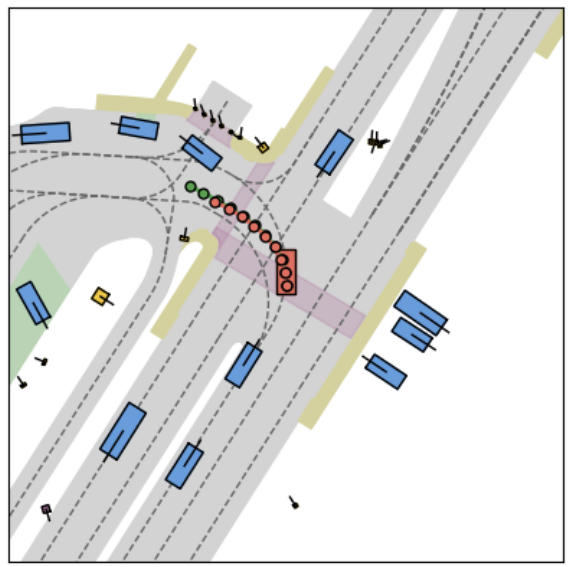}\hfill
    \includegraphics[width=0.245\textwidth]{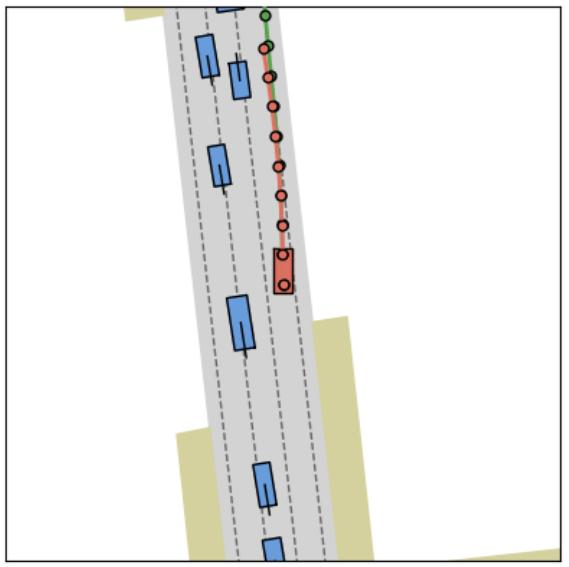}\hfill
    \includegraphics[width=0.245\textwidth]{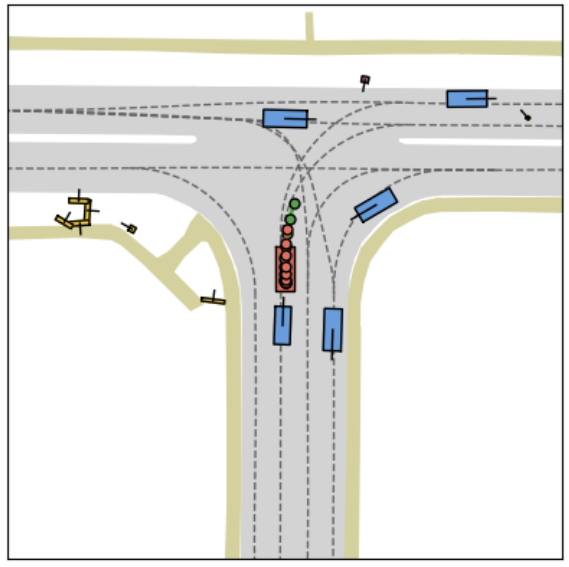}

    \caption{\textbf{LTF \cite{Chitta22TransFuser} qualitative predictions across four NAVSIM scenes.}
    From left to right: example scenes 1, 2, 3, and 4.}
    \label{fig:ltf_1row_4scenes}
\end{figure*}

\subsection{NAVSIM \texttt{navtest}: our framework w/ Euler}
\label{sec:navsim_navtest_additional}

\paragraph{CIL++ w/$\mlplift$ \& MILE w/$\mlplift$ results w/RK4.} In NAVSIM \texttt{navtest}, other agents do not react to the ego, so small improvements in the predicted trajectory with respect to the PDMS components translate directly into higher PDMS. An unconstrained MLP has sufficient flexibility to learn this dataset- and metric-induced action-to-waypoint mapping for the \texttt{navtest} protocol. In contrast, KBM/CCPP deliberately restrict the trajectory family through a vehicle-model prior; this structure is expected to matter more in reactive evaluation (e.g., \texttt{navhard}) because the ego trajectory influences the future scene, such that small differences in curvature, acceleration, and feasibility can change how other agents respond and whether the ego remains safe and compliant over time. In \texttt{navtest}, however, the scene evolution is fixed and does not react to the ego, so these interaction-related benefits are not strongly rewarded by the open-loop metrics. 

\paragraph{Results with Euler integration.}
Tab.~\ref{tab:navsim_kbm_euler} reports open-loop performance on \texttt{navtest}. Among our action-based variants, CIL++ w/ CCPP attains the best PDMS ($83.2$), improving over CIL++ w/ KBM ($81.9$). For MILE, CCPP likewise outperforms KBM ($82.3$ vs.\ $81.0$). At the submetric level, all framework-coupled variants preserve perfect comfort ($100$) and maintain strong safety/compliance (NC/DAC/TTC); the remaining PDMS differences are largely explained by small shifts in TTC and EP. Consistent with our discussion of \texttt{navtest} being non-reactive, the MLP action-to-waypoint controllers slightly outperform the structured rollouts (CIL++: $84.5$; MILE: $83.5$), as the open-loop protocol primarily rewards small improvements in the predicted trajectory under the PDMS components.

\paragraph{Euler vs.\ RK4 on \textit{navtest}.}
We next compare Euler (Tab.~\ref{tab:navsim_kbm_euler}) to RK4 (Tab.~\ref{tab:navsim_kbm_rk4}) while keeping the same action policies and evaluation protocol. Baselines and MLP mappers remain unchanged, as they do not involve numerical integration. In contrast, the vehicle-model instantiations generally benefit from RK4: KBM improves for both backbones (CIL++: $81.9\!\rightarrow\!83.3$; MILE: $81.0\!\rightarrow\!82.7$), with gains reflected mainly in higher DAC and EP (e.g., CIL++ KBM: DAC $93.7\!\rightarrow\!94.7$, EP $77.3\!\rightarrow\!78.3$; MILE KBM: DAC $92.1\!\rightarrow\!94.3$, EP $75.9\!\rightarrow\!77.4$). For CCPP, RK4 is competitive and slightly improves MILE ($82.3\!\rightarrow\!82.8$), while CIL++ shows a small decrease ($83.2\!\rightarrow\!82.5$), indicating a minor trade-off among the tightly clustered secondary metrics rather than a systematic failure mode.

\paragraph{Why RK4 tends to help.}
This trend is expected: in our framework, the predicted action horizon is converted into a waypoint trajectory by repeatedly integrating continuous-time vehicle dynamics over discrete steps. Euler introduces larger truncation error per step, which can accumulate over the $K$-step horizon and slightly distort the executed trajectory, affecting sensitive open-loop terms such as DAC/EP (and, to a lesser degree, TTC). RK4 provides a higher-order approximation of the same dynamics, reducing discretization error and yielding trajectories that more faithfully reflect the intended motion model over the evaluation horizon. In a non-reactive benchmark like \texttt{navtest}, where performance is determined by the single-shot predicted trajectory under a fixed scene evolution, this improved rollout fidelity translates directly into stronger PDMS for the model-based instantiations.

\subsection{NAVSIM \texttt{navhard}}
\label{sec:navsim_navhard_additional}

\paragraph{Discussion of submetrics in NAVSIM \texttt{navhard} w/ RK4.} The clearest separation between mappings emerges in the reactive stage (S2). In particular, \emph{drivable-area compliance} (DAC) drops markedly for the MLP couplings (e.g., CIL++: DAC $56.9$), while KBM and especially CCPP recover substantially higher DAC (CIL++: $62.4/69.9$; MILE: $75.3/73.6$). We attribute this to error accumulation under feedback: in a reactive rollout, small trajectory deviations can trigger compounding corrections that push the ego outside the drivable region, whereas KBM/CCPP provide a structured, dynamically plausible motion prior that stabilizes the executed trajectory and improves off-road avoidance. A second notable pattern is that MILE w/ CCPP achieves the strongest \emph{progress} and \emph{lane keeping} in S2 (EP $88.8$, LK $50.9$), consistent with continuous-curvature rollouts producing steadier steering while sustaining forward motion. Finally, comfort subscores are comparatively less diagnostic: \emph{HC} is near-saturated across methods ($\approx95$--$98$), and \emph{EC} can be high even when safety/compliance degrade (e.g., MLP attains EC $79.8$ in S2 but lower NC/DAC), indicating that the main practical gains of KBM/CCPP on \texttt{navhard} come from improved reactive compliance and stability rather than comfort alone.

\paragraph{Results with Euler integration.}
Tab.~\ref{tab:navhard_framework} reports performance on \textit{navhard} under the reactive pseudo-simulation protocol. Under Euler, the strongest overall results are obtained by MILE couplings, with MILE w/ CCPP achieving the best EPDMS ($26.0$) followed by MILE w/ KBM ($25.0$), while CIL++ couplings reach EPDMS $22.3$ for both KBM and CCPP. Importantly, coupling the action-based baselines with our MLP controller remains clearly less effective in the reactive setting (CIL++ w/ MLP: $21.0$; MILE w/ MLP: $24.2$), further supporting that once the scene responds to the ego, the vehicle-model structure in our KBM/CCPP instantiations becomes advantageous and consistently outperforms an unconstrained mapping. At the submetric level, the most salient gaps again occur in the reactive stage S2, particularly in safety/compliance and stability-related terms: compared to MLP, the vehicle-model instantiations maintain higher drivable-area compliance and comparable or improved at-fault collision rates, while sustaining strong progress (EP) and lane keeping (LK). Comfort terms (HC/EC) are comparatively less diagnostic, remaining high or mixed across methods and not directly tracking safety/compliance differences.

\paragraph{Euler vs.\ RK4 on \texttt{navhard}.}
We next compare Euler (Tab.~\ref{tab:navhard_framework}) to RK4 (Tab.~\ref{tab:navhard_framework_rk4}). Since CV, MLP and LTF do not depend on integration, their scores remain unchanged; the differences arise solely from the rollout used by KBM/CCPP. Across all framework-coupled variants, RK4 yields substantially higher EPDMS than Euler: for CIL++, KBM/CCPP improve from $22.3$ to $27.3$; for MILE, KBM improves from $25.0$ to $28.2$ and CCPP from $26.0$ to $29.5$. These gains are reflected in the secondary metrics that matter most in S2, where RK4 particularly strengthens compliance and stability (e.g., higher DAC and DDC, and improved LK for the best-performing MILE couplings), while maintaining competitive progress (EP) and near-saturated traffic-light compliance (TLC). In other words, the higher-order integration consistently shifts the reactive-stage metrics in the direction that the pseudo-simulation rewards: fewer off-road events and more stable execution over time.

\paragraph{Why RK4 helps under reactive pseudo-simulation.}
This trend is expected in \textit{navhard}: the pseudo-simulation rolls the scene forward under feedback, so small rollout errors can compound across the horizon and influence future interactions, especially for compliance- and collision-related terms. Euler integration introduces larger discretization error per step, which can accumulate into trajectory drift and unstable curvature evolution, increasing the likelihood of leaving the drivable area or violating lane-keeping constraints in S2. RK4 provides a higher-order approximation of the same continuous-time dynamics, reducing truncation error and yielding trajectories that better respect the intended motion model, which in turn improves stability and compliance under feedback. As a result, RK4 is systematically better aligned with the reactive evaluation protocol of \textit{navhard}, and its advantage over Euler is larger than in the non-reactive \textit{navtest} setting.

\subsection{Bench2Drive \emph{Dev10}: our framework w/ Euler}
\label{app:extrab2d_euler}

\paragraph{Best-checkpoint evaluation with multiple seeds w/ Euler.}
We conduct the same best-checkpoint, three-seed evaluation as in the RK4 setting of the main paper, but using Euler rollouts. For each method, we select the best \emph{Dev10} checkpoint and re-evaluate it under three random evaluation seeds; Table~\ref{tab:kbm_dev10_euler} reports mean\,$\pm$\,standard deviation. Under Euler rollouts, both instantiations again improve closed-loop performance over the action-only baselines. For MILE, CCPP yields the higher DS ($61.5$) while KBM attains slightly higher completion ($74.5$ RC), indicating a mild DS--RC trade-off at this best-checkpoint operating point. For CIL++, the gains are substantially larger: KBM improves DS to $66.2$ with high completion ($83.1$ RC), while CCPP reaches the strongest DS ($71.7$) and also the best RC ($84.9$), with near-zero completion variance.

\paragraph{Euler vs.\ RK4 stability.}
Comparing Tables~\ref{tab:kbm_dev10_euler} and~\ref{tab:kbm_dev10_rk4}, RK4 generally yields more stable best-checkpoint behavior, particularly for completion. For MILE, RK4 makes RC essentially deterministic for both KBM and CCPP compared to Euler ($\pm 4.8$ for KBM and $\pm 1.0$ for CCPP), while keeping DS variance comparable (RK4: $\pm 1.2$ vs.\ Euler: $\pm 2.4$/$\pm 2.7$). For CIL++, RK4 preserves the very low RC variance seen under Euler and offers slightly tighter DS dispersion for KBM ($\pm 0.8$ vs.\ $\pm 1.4$), while CCPP maintains similar DS spread ($\pm 1.6$ vs.\ $\pm 3.0$) alongside negligible RC variance. 

\paragraph{On the omission of MLP baselines in Bench2Drive.} We do not include MLP controllers in Bench2Drive because it would require non-trivial architectural modifications for CIL++ and MILE beyond the scope of this study. Moreover, Bench2Drive provides ground-truth actions, allowing us to train the original action-prediction baselines in their native form; in contrast, NAVSIM does not provide ground-truth actions, which necessitates MLP waypoint-based variants to construct comparable baselines.

\subsection{Correlation study}
\label{app:extracorrst}

\textbf{Single-lane} and \textbf{multi-lane} denote two \emph{datasets} collected in CARLA under different map complexities. Single-lane is collected in Town01, a small town that only enables single-lane driving (i.e., lane changes are not possible): we collect 15 hours of data at 10\,fps ($\sim$540K frames per camera view) under four training weathers (ClearNoon, ClearSunset, HardRainNoon, WetNoon), and evaluate generalization in Town02 under SoftRainSunset and WetSunset. In contrast, multi-lane is collected across multiple towns to include more complex scenarios such as multi-lane driving, entering/exiting highways, and crossroads: following the standard setting, we hold out Town05 for testing and collect 25 hours of data at 10\,fps from Town01--Town06 (5 hours per town; $\sim$900K frames per camera), using the same training and testing weathers as in the Town01/Town02 setting \cite{Xiao2023cilpp}.

\paragraph{Further discussion.} Following Codevilla \emph{et al.}~\yrcite{Codevilla18Offline}, we also report in Figure~\ref{fig:correlation} the correlation obtained with \emph{steering-only} error, since it is shown to correlate more strongly with driving performance than a combined steering–acceleration metric. Even in this case, our waypoint-level losses (via KBM/CCPP rollouts) achieve comparable or higher correlation, which supports using them directly as training losses rather than relying on steering-only error as a proxy metric. Finally, the trends are consistent across the single-lane and multi-lane datasets, but correlations are weaker on multi-lane due to the higher scenario complexity, as expected.

\begin{figure*}[p]
    \centering

    \includegraphics[width=0.495\textwidth]{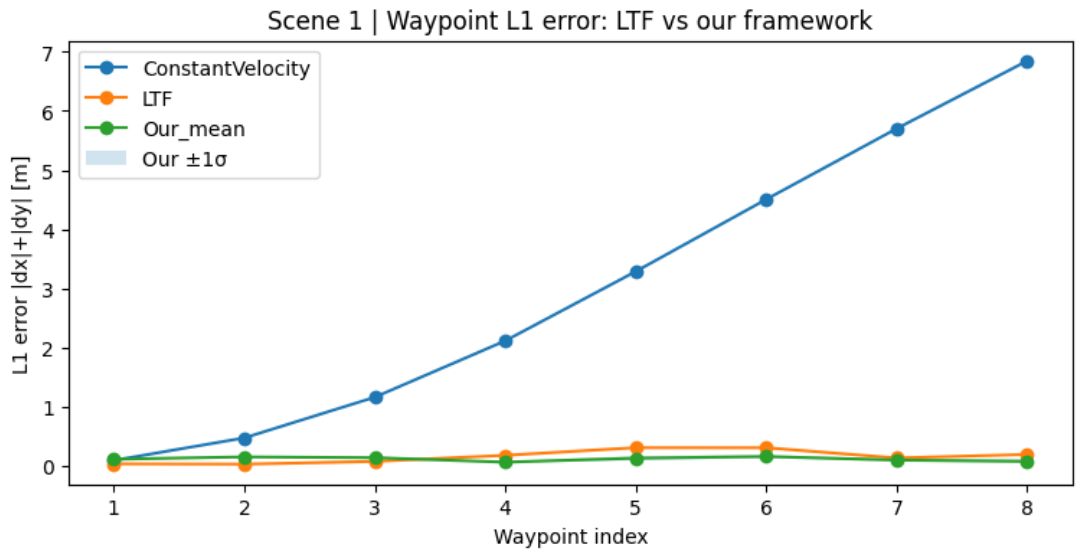}\hfill
    \includegraphics[width=0.495\textwidth]{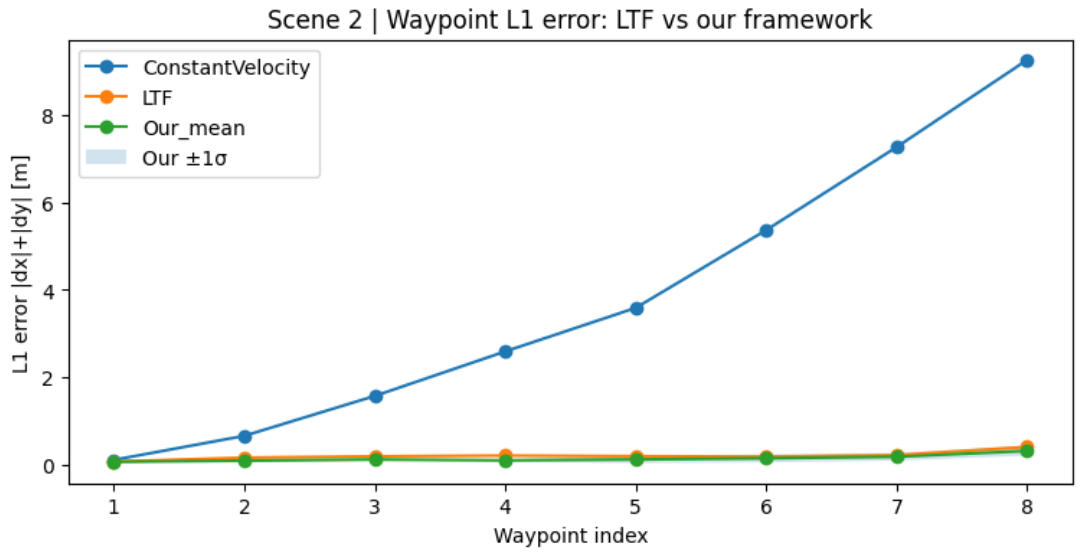}\\[4pt]

    \includegraphics[width=0.495\textwidth]{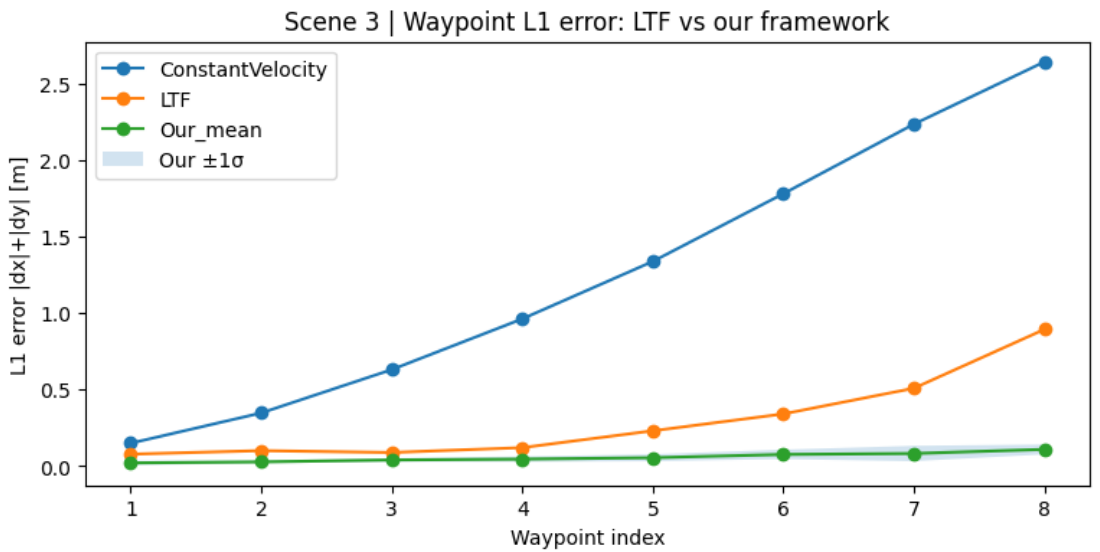}\hfill
    \includegraphics[width=0.495\textwidth]{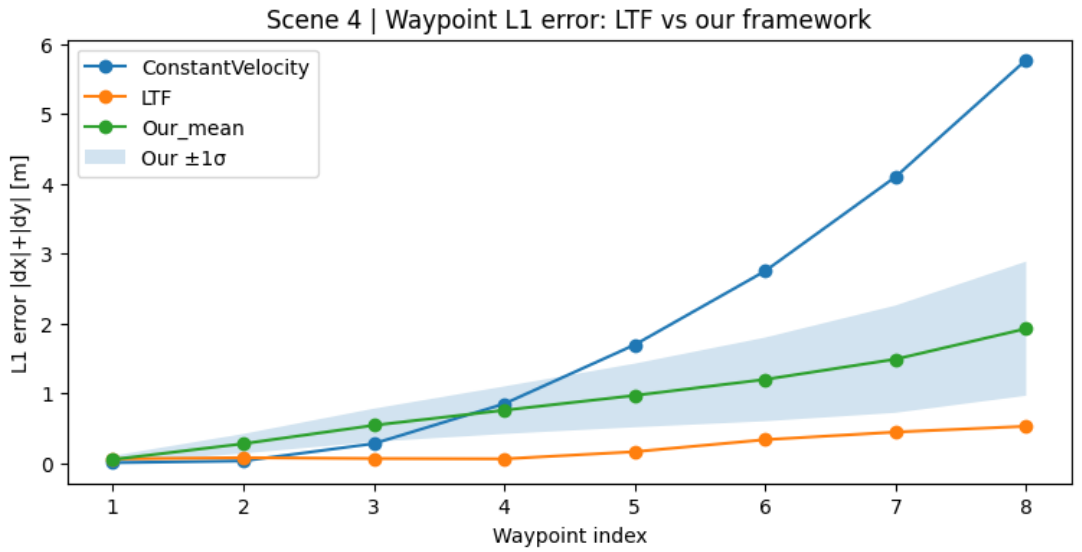}

    \caption{\textbf{Qualitative comparison across four NAVSIM scenes.}
    Each panel compares LTF vs our framework. We take the mean $\pm$ standard deviation over our four Euler-based instantiations: CIL++ w/ KBM, CIL++ w/ CCPP, MILE w/ KBM and MILE w/ CCPP.}
    \label{fig:scene_mosaic_ltf_vs_ours_big}
\end{figure*}

\begin{table*}[t]
\centering
\caption{\textbf{Framework integration on NAVSIM \textit{navtest} with Euler.} 
We take vision-only, action policies and map their action predictions into waypoints via our framework instantiations: KBM and CCPP.
The \emph{Prediction} column indicates the model's output space (\textit{waypoints} vs.\ \textit{actions}). 
We report NAVSIM \textit{navtest} six non-reactive open-loop metrics: number of collisions (NC), distance-to-all-collisions (DAC), time-to-collision (TTC), comfort (Comf.), efficiency penalty (EP), and the planning-driven metric score (PDMS) \cite{Dauner24NAVSIM}.
In addition to the CV and MLP baselines, we report results for \emph{LAW}, the current state-of-the-art vision-only method on NAVSIM \textit{navtest}.}
\label{table_2}
\label{tab:navsim_kbm_euler}
\begin{tabular}{lcccccccc}
\toprule
\textbf{Model} & \textbf{Lifting} & \textbf{Prediction} &
\textbf{NC} $\uparrow$ & \textbf{DAC} $\uparrow$ & \textbf{TTC} $\uparrow$ &
\textbf{Comf.} $\uparrow$ & \textbf{EP} $\uparrow$ & \textbf{PDMS} $\uparrow$ \\
\midrule
CV \cite{Dauner24NAVSIM}   & — & waypoints & $68.0$ & $57.8$ & $50.0$ & $100$ & $19.4$ & $20.6$ \\
Ego MLP \cite{Dauner24NAVSIM}        & — & waypoints & $93.0$ & $77.3$ & $83.6$ & $100$ & $62.8$ & $65.6$ \\
LTF \cite{Chitta22TransFuser} & — & waypoints & $97.4$ & $92.8$ & $92.4$ & $100$ & $79.0$ & $83.8$ \\
LAW \cite{Li25LatentWorldModel} & — & waypoints & $96.4$ & $95.4$ & $88.7$ & $99.9$ & $81.7$ & $84.6$ \\
\midrule
\multirow{3}{*}{CIL++ \cite{Xiao2023cilpp}} & MLP & waypoints & $96.5$ & $94.6$ & $91.8$ & $100$ & $79.3$ & $84.5$ \\
  & $\kbmlift$ & actions & $95.7$ & $93.7$ & $90.6$ & $100$ & $77.3$ & $81.9$ \\
  & $\ccpplift$ & actions & $96.0$ & $94.2$ & $90.4$ & $100$ & $78.1$ & $83.2$ \\
\midrule
\multirow{3}{*}{MILE \cite{Hu22MILE}} & MLP & waypoints & $96.7$ & $93.7$ & $91.7$ & $100$ & $78.0$ & $83.5$ \\
 & $\kbmlift$ & actions & $96.0$ & $92.1$ & $89.9$ & $100$ & $75.9$ & $81.0$ \\
 & $\ccpplift$ & actions & $95.7$ & $93.7$ & $90.3$ & $100$ & $76.9$ & $82.3$ \\
\bottomrule
\end{tabular}
\end{table*}

\begin{table*}[t]
\centering
\small
\setlength{\tabcolsep}{4pt}
\renewcommand{\arraystretch}{1.05}
\caption{ 
\textbf{Framework integration on NAVSIM \textit{navhard} w/ Euler.} We report NAVSIM \textit{navhard} metrics from the NAVSIM v2 pseudo-simulation protocol~\cite{Cao25PseudoSimulation}: the overall extended predictive driver model score (EPDMS) and its subscores, covering safety and compliance (no at-fault collisions, NC; drivable area compliance, DAC; traffic light compliance, TLC), progress (ego progress, EP), collision risk (time to collision, TTC), lane keeping (LK), and comfort (history comfort, HC; extended comfort, EC). For reference, we also include the Constant Velocity, MLP and Latent TransFuser models \cite{Chitta22TransFuser,Cao25PseudoSimulation}.}
\label{tab:navhard_framework}

\begin{tabular}{c c | c c c | c c c | c c c}
\toprule
\multirow[c]{2}{*}{\textbf{Metric}} &
\multirow[c]{2}{*}{\textbf{Stage}} &
\multirow[c]{2}{*}{\textbf{CV}} &
\multirow[c]{2}{*}{\textbf{MLP}} &
\multirow[c]{2}{*}{\textbf{LTF}} &
\multicolumn{3}{c|}{\textbf{CIL++}~\cite{Xiao2023cilpp}} &
\multicolumn{3}{c}{\textbf{MILE}~\cite{Hu22MILE}} \\
\cmidrule(lr){6-8}\cmidrule(lr){9-11}
& & & & &
\textbf{w/ MLP} & \textbf{w/ $\kbmlift$} & \textbf{w/ $\ccpplift$} &
\textbf{w/ MLP} & \textbf{w/ $\kbmlift$} & \textbf{w/ $\ccpplift$} \\
\midrule

\multirow{2}{*}{NC$\uparrow$}  & S1 & 88.8 & 93.2 & \textbf{96.2}
& 94.1 & 95.9 & 95.2
& 95.9 & 95.4 & 95.6 \\
                              & S2 & 83.2 & 77.2 & 77.7
& 76.9 & 77.9 & 76.8
& 79.3 & 77.7 & \textbf{80.2} \\
\midrule

\multirow{2}{*}{DAC$\uparrow$} & S1 & 42.8 & 55.7 & 79.5
& \textbf{80.0} & 74.7 & 77.8
& 70.9 & 73.1 & 79.3 \\
                              & S2 & 59.1 & 51.9 & 70.2
& 56.9 & 61.2 & 59.2
& \textbf{71.6} & 71.4 & 70.9 \\
\midrule

\multirow{2}{*}{DDC$\uparrow$} & S1 & 70.6 & 86.6 & \textbf{99.1}
& 97.6 & 98.1 & 98.3
& 98.1 & 97.4 & 97.6 \\
                              & S2 & 76.5 & 74.4 & 84.2
& 76.5 & 79.9 & 78.0
& \textbf{85.2} & 84.4 & 81.9 \\
\midrule

\multirow{2}{*}{TLC$\uparrow$} & S1 & 99.3 & 99.3 & 99.5
& \textbf{99.6} & \textbf{99.6} & \textbf{99.6}
& \textbf{99.6} & \textbf{99.6} & \textbf{99.6} \\
                              & S2 & 98.0 & 98.2 & 98.0
& \textbf{98.5} & 97.8 & 98.0
& 98.1 & 98.1 & 98.0 \\
\midrule

\multirow{2}{*}{EP$\uparrow$}  & S1 & 77.5 & 81.2 & 84.1
& \textbf{85.4} & 85.2 & 85.2
& 84.3 & 84.0 & 84.2 \\
                              & S2 & 71.3 & 77.1 & 85.1
& 88.5 & 87.7 & 88.2
& 87.9 & 88.1 & \textbf{89.2} \\
\midrule

\multirow{2}{*}{TTC$\uparrow$} & S1 & 87.3 & 92.2 & \textbf{95.1}
& 92.9 & 91.8 & 91.8
& 92.2 & 92.9 & 93.6 \\
                              & S2 & \textbf{81.1} & 75.0 & 75.6
& 73.7 & 75.5 & 73.3
& 75.9 & 74.4 & 75.5 \\
\midrule

\multirow{2}{*}{LK$\uparrow$}  & S1 & 78.6 & 83.5 & 94.2
& 93.6 & 90.9 & 93.8
& 92.9 & 94.0 & \textbf{94.7} \\
                              & S2 & 47.9 & 40.8 & 45.4
& 46.3 & 44.3 & 46.1
& \textbf{49.8} & 46.2 & 46.8 \\
\midrule

\multirow{2}{*}{HC$\uparrow$}  & S1 & 97.1 & 97.5 & 97.5
& \textbf{97.8} & 97.6 & 97.6
& 97.6 & 97.6 & 97.6 \\
                              & S2 & 97.1 & \textbf{97.8} & 95.7
& 96.3 & 95.6 & 96.3
& 95.9 & 95.8 & 96.0 \\
\midrule

\multirow{2}{*}{EC$\uparrow$}  & S1 & 60.4 & 77.7 & \textbf{79.1}
& 68.9 & 76.0 & 72.9
& 74.7 & 73.3 & 72.4 \\
                              & S2 & 61.9 & \textbf{79.8} & 75.9
& 60.4 & 64.2 & 62.4
& 65.5 & 63.1 & 63.0 \\
\midrule

\multicolumn{2}{c|}{\textbf{EPDMS$\uparrow$}}
& 10.9 & 12.7 & 23.1
& 21.0 & 22.3 & 22.3
& \cellcolor{blue!15}24.2 & \cellcolor{blue!30}25.0 & \cellcolor{blue!50}26.0 \\
\bottomrule
\end{tabular}%
\end{table*}

\begin{table}[t]
\centering
\footnotesize
\setlength{\tabcolsep}{4pt}
\renewcommand{\arraystretch}{1.05}

\caption{\textbf{Closed-loop performance of our Euler-based framework in Bench2Drive.} We report Bench2Drive \emph{Dev10} results with and without our Euler-based framework, averaging closed-loop metrics over three evaluation seeds. For the best checkpoint, we report mean $\pm$ standard deviation for Driving Score (\textbf{DS} $\uparrow$) and Route Completion (\textbf{RC} $\uparrow$).}
\label{tab:kbm_dev10_euler}

\begin{tabular}{lll}
\toprule
\textbf{Model} & \textbf{DS} $\uparrow$ & \textbf{RC} $\uparrow$ \\
\midrule
MILE                 & $51.7^{\pm 1.2}$ & $56.6^{\pm 1.2}$ \\
\quad w/ $\kbmlift$          & $57.9^{\pm 2.4}$ (\improve{6.2}) & $74.5^{\pm 4.8}$ (\improve{17.9})\\
\quad w/ $\ccpplift$         & $61.5^{\pm 2.7}$ (\improve{9.8}) & $72.5^{\pm 1.0}$ (\improve{15.9})\\
\midrule
CIL++               & $43.6^{\pm 1.5}$ & $52.0^{\pm 1.0}$ \\
\quad w/ $\kbmlift$          & $66.2^{\pm 1.4}$ (\improve{22.6}) & $83.1^{\pm 0.1}$ (\improve{31.1}) \\
\quad w/ $\ccpplift$         & $71.7^{\pm 3.0}$ (\improve{28.1}) & $84.9^{\pm 0.0}$ (\improve{32.9}) \\
\bottomrule
\end{tabular}
\end{table}

\section{Numerical Analysis}
\label{app:numerical_analysis}

This appendix complements Section~\ref{subsec:numerics} with a broader sweep over horizons $C_f$ and control intervals $\Delta t$, and with an explicit compute--accuracy view for CCPP substepping. Throughout, we report open-loop rollout errors on the validation set, measured as the $L_1$ position error between predicted ego-frame waypoints $\hat{w}_{t,k}\in\mathbb{R}^2$ and ground-truth waypoints $w^{\mathrm{gt}}_{t,k}\in\mathbb{R}^2$ (cf.\ Eq.~\ref{eq:training_obj}). Where available, we also report yaw error from the propagated heading state; this is \emph{not} part of the training objective, but helps interpret integration-induced drift.

\paragraph{Horizon and framerate effects.}
Figure~\ref{fig:horizon_effect} aggregates the mean $\mathrm{XY}$ $L_1$ error as prediction time increases, either by increasing the horizon length $C_f$ or by using a larger control interval $\Delta t$. At fine temporal resolutions ($\Delta t\in\{0.1,0.2\}$s), all methods remain close, with only marginal separation between Euler and RK4. At coarse temporal resolution ($\Delta t=0.5$s), errors grow rapidly with prediction time and the choice of dynamics/integrator becomes consequential: CCPP achieves consistently lower error than KBM, and RK4 improves over Euler for both models, with the gap widening for longer horizons.

\begin{figure}[ht]
    \centering
    \includegraphics[width=0.98\columnwidth]{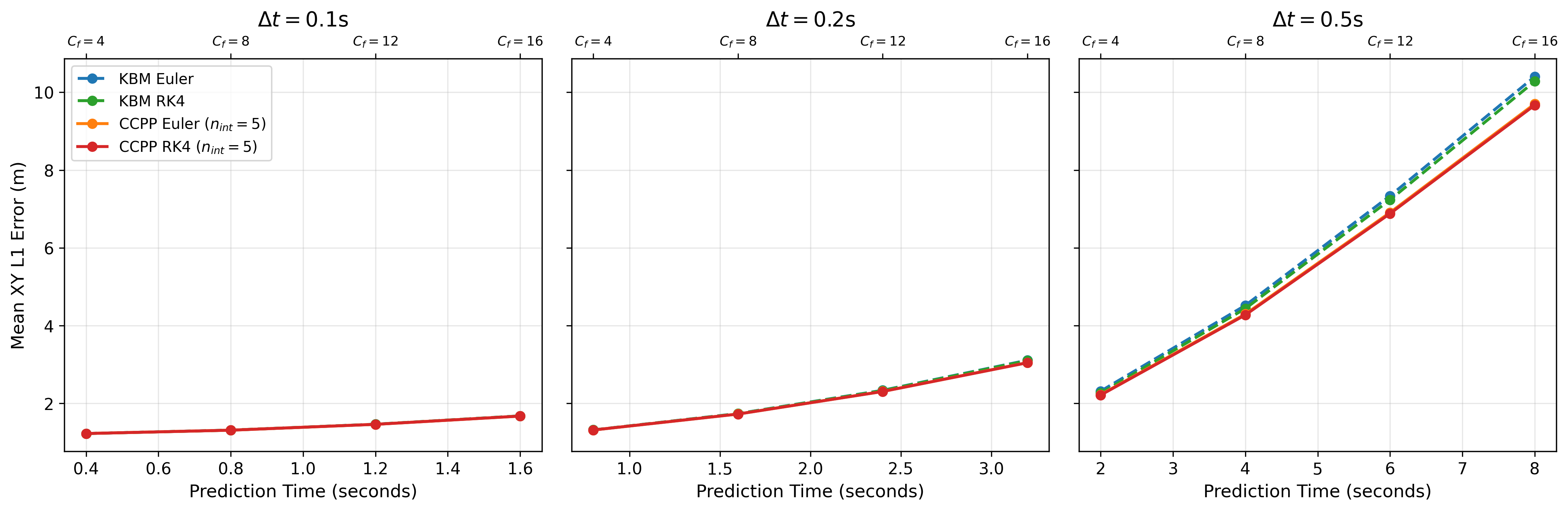}
    \caption{\textbf{Effect of horizon and framerate on rollout error.} Mean $\mathrm{XY}$ $L_1$ waypoint error versus prediction time for multiple horizons $C_f$ and control intervals $\Delta t$. At small $\Delta t$, all methods are similar; at $\Delta t=0.5$s, errors compound and CCPP (especially with RK4) maintains lower error for long-horizon prediction.}
    \label{fig:horizon_effect}
\end{figure}

\paragraph{Substepping accuracy vs.\ compute.}
CCPP refines each $\Delta t$ interval using $n_{\text{int}}$ arc-length substeps (Table~\ref{tab:design-space}). Figure~\ref{fig:n_int_flops_pareto_multi_timestep} summarises the resulting trade-off: increasing $n_{\text{int}}$ improves accuracy but increases computation. The gains are most pronounced at coarse temporal resolution, where discretisation error is largest; at higher framerates, returns diminish because all integrators already operate in a small-step regime. This motivates using moderate substepping (together with RK4) when long-horizon accuracy at low control frequency is critical.

\begin{figure}[ht]
    \centering
    \includegraphics[width=0.98\columnwidth]{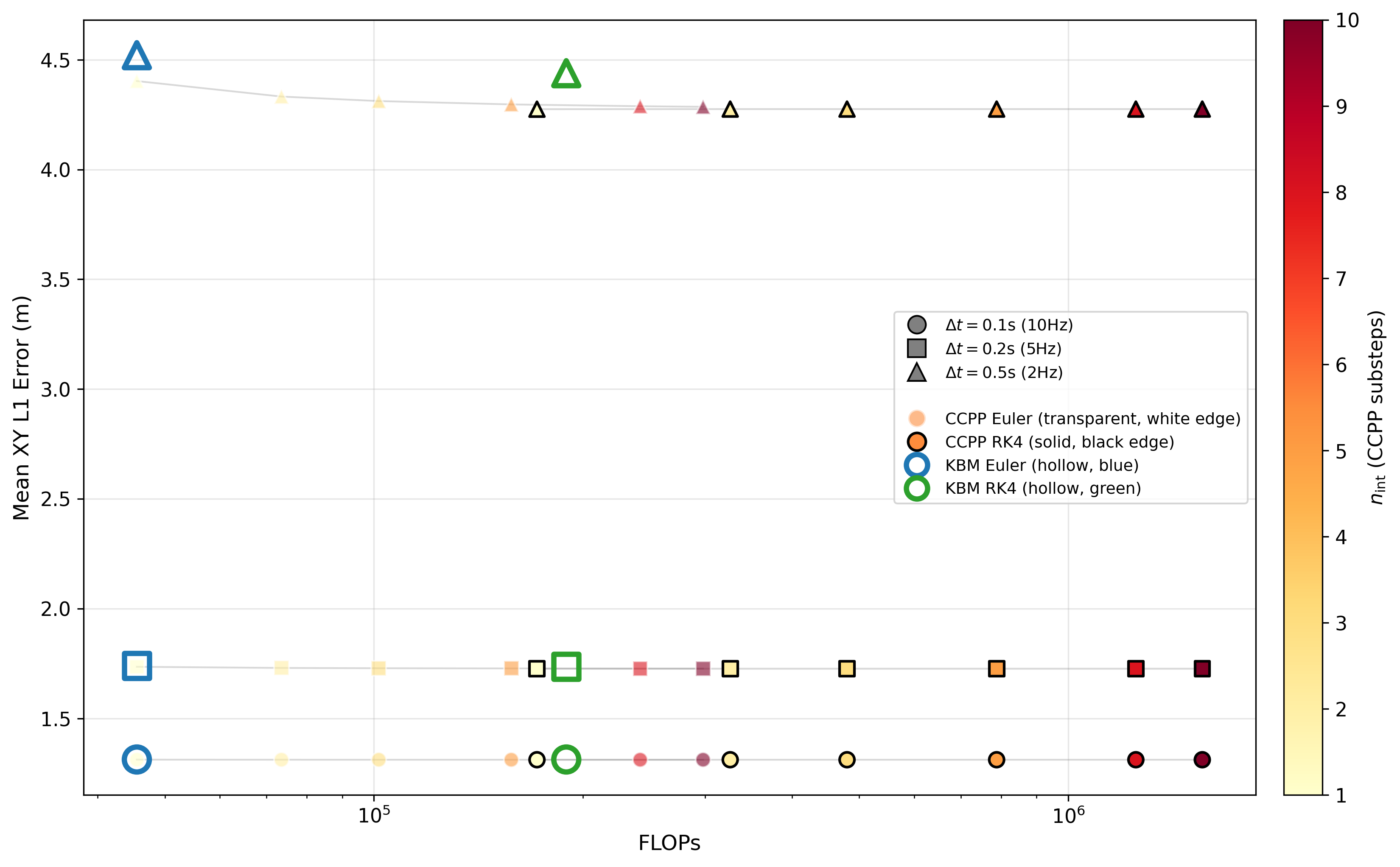}
    \caption{\textbf{CCPP substepping trade-off.} Compute--accuracy relationship when varying the number of CCPP substeps $n_{\text{int}}$ across control intervals $\Delta t$. Substepping reduces rollout error, especially at coarse $\Delta t$, at the cost of increased computation.}
    \label{fig:n_int_flops_pareto_multi_timestep}
\end{figure}

\paragraph{Per-waypoint error profiles.}
Figure~\ref{fig:per_waypoint_multi_timestep_cf8} reports mean and standard deviation of per-waypoint errors for a fixed horizon ($C_f=8$) across multiple $\Delta t$. The top row shows $\mathrm{XY}$ $L_1$ error accumulation over time, while the bottom row shows yaw error accumulation. Two trends are consistent across settings: (i) error grows with prediction time and grows faster for larger $\Delta t$, and (ii) higher-order integration (RK4) and CCPP substepping reduce both position and yaw drift, with the clearest separation appearing at $\Delta t=0.5$s. These profiles contextualise why open-loop performance differences become more impactful in reactive pseudo-simulation and closed-loop evaluation, where rollout inaccuracies compound over successive decisions.

\begin{figure}[ht]
    \centering
    \includegraphics[width=0.98\columnwidth]{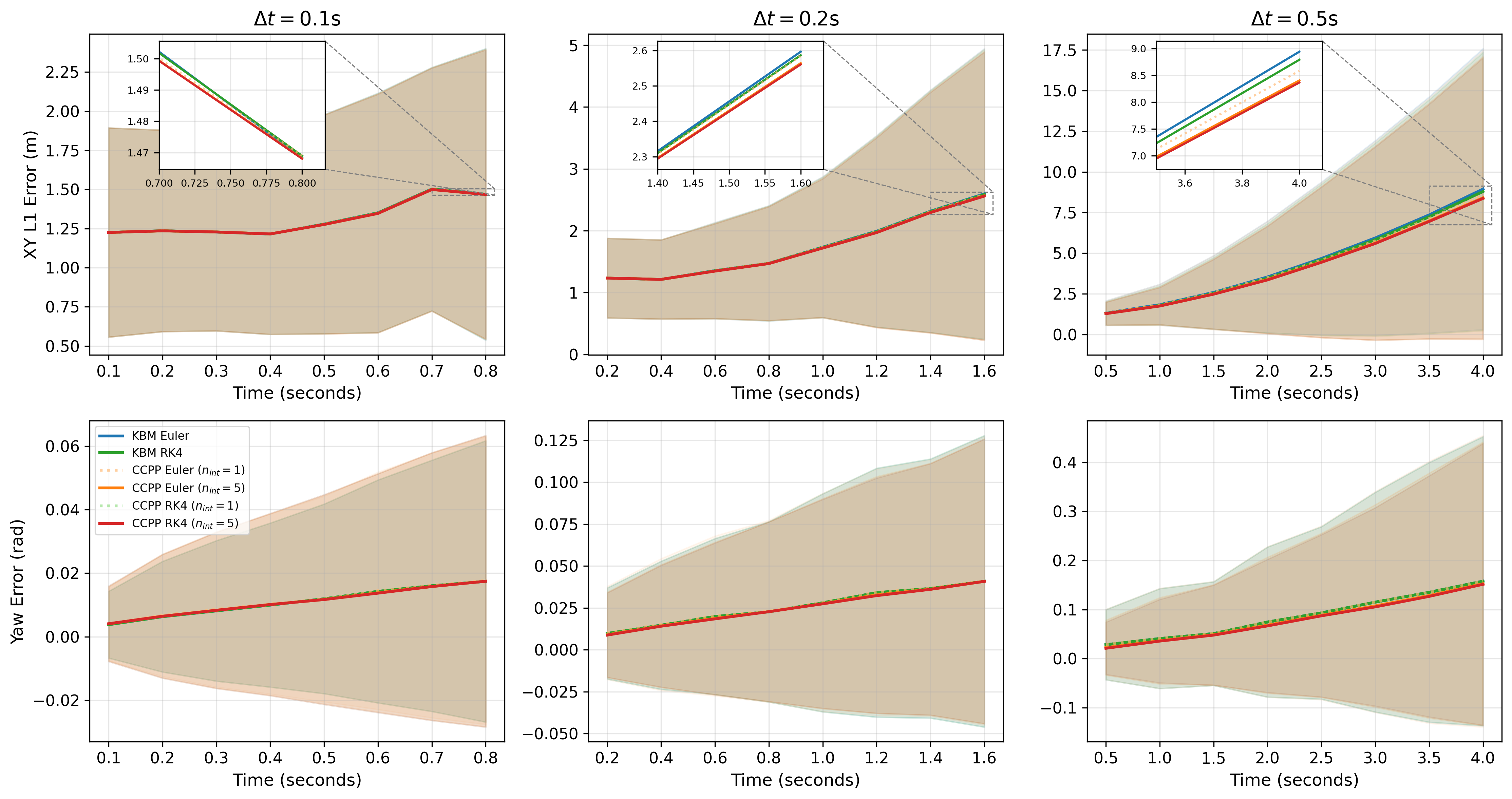}
    \caption{\textbf{Per-waypoint error accumulation for $C_f=8$ across control intervals.} Mean and standard deviation of $\mathrm{XY}$ $L_1$ error (top) and yaw error (bottom) versus time for $\Delta t\in\{0.1,0.2,0.5\}$s, comparing KBM/CCPP and Euler/RK4 (with CCPP substepping where applicable). Error growth accelerates as $\Delta t$ increases; RK4 and CCPP substepping reduce drift most clearly at coarse $\Delta t$.}
    \label{fig:per_waypoint_multi_timestep_cf8}
\end{figure}